\documentclass[12pt]{article}

\usepackage{sibarticle}
\usepackage{wrapfig, ulem, graphics}
\usepackage{mathtools}
\usepackage{comment}

\graphicspath{{figures/}}

\newcommand{\CG}{\mathcal{G}}
\newcommand{\CL}{\mathcal{L}}
\newcommand{\RR}{\mathbb{R}}
\newcommand{\bm}[1]{\mbox{\boldmath{$#1$}}}

\newcommand{\vv}{\bm{v}}
\newcommand{\vu}{\bm{u}}
\newcommand{\vr}{\bm{r}}
\newcommand{\vx}{\bm{x}}
\newcommand{\vy}{\bm{y}}
\newcommand{\vz}{\bm{z}}
\newcommand{\vbeta}{\bm{\beta}}
\newcommand{\mV}{\bm{V}}
\newcommand{\mX}{\bm{X}}
\newcommand{\mZ}{\bm{Z}}
\newcommand{\tr}{^{\intercal}}
\newcommand{\inv}{^{-1}}
\newcommand{\LESS}{\texttt{LESS}}
\newcommand{\LESSA}{\texttt{LESS-A}}
\newcommand{\LESSB}{\texttt{LESS-B}}

\title{Learning with Subset Stacking}

\ShortTitle{LESS}
\ShortAuthors{Birbil, Yıldırım, Çopur \& Akyüz}
\NumberOfAuthors{4}

\FirstAuthor{Ş. İlker Birbil\thanks{Corresponding author. Email: \href{mailto:s.i.birbil@uva.nl}{s.i.birbil@uva.nl}}}
\FirstAuthorAddress{Amsterdam Business School, University of Amsterdam, 1018 TV Amsterdam, The Netherlands}
\SecondAuthor{Sinan Yıldırım}
\SecondAuthorAddress{Faculty of Engineering and Natural Sciences, Sabancı
University, 34956 Istanbul, Türkiye}
\ThirdAuthor{Samet Çopur}
\ThirdAuthorAddress{SunExpress, Antalya, 07230, Türkiye}
\FourthAuthor{M. Hakan Akyüz}
\FourthAuthorAddress{Department of Industrial Engineering, Middle East Technical University, 06800 Ankara, Türkiye}


\keywords{regression, subset selection, stacking, feature generation}

\onehalfspacing

\allowdisplaybreaks

\setlength{\marginparwidth}{2cm}

\begin{document}

\maketitle

\begin{abstract}
We propose a new regression algorithm that learns from a set of input-output pairs. Our algorithm is designed for populations where the relation between the input variables and the output variable exhibits a heterogeneous behavior across the predictor space. The algorithm starts with generating subsets that are concentrated around random points in the input space. This is followed by training a local predictor for each subset. Those predictors are then combined in a novel way to yield an overall predictor. We call this algorithm ``LEarning with Subset Stacking'' or \LESS{}, due to its resemblance to the method of stacking regressors. We offer bagging and boosting variants of \LESS{} and test against the state-of-the-art methods on several datasets. Our comparison shows that \LESS{} is highly competitive.
\end{abstract}

\section{Introduction}\label{sec: Introduction}

This paper concerns the application of subset sampling and aggregation to achieve effective learners with computational efficiency on large-scale data. The general approach is to partition the dataset into (random) subsets on which local predictors are trained. Then, the prediction of test data is achieved by using an aggregation over the local predictors.

Learning with subsets obtained from a given training set has been quite popular due to its potential to improve generalization performance. Such an approach is called bagging when multiple predictors are treated the same, and their average is used in the aggregation stage as the overall predictor \citep{Breiman_1996a}. Bagging uses sampling with replacement to construct training datasets (i.e., subsets) of the same size as the original dataset, similar to the bootstrapping technique. To achieve savings from a computational perspective as well as further improvements in generalization performance, bagging has been applied to randomly drawn training subsets; such methods are also known as subsample aggregating, shortly called subagging \citep{Buhlmann_and_Yu_2002, Andonova_et_al_2002, Evgeniou_et_al_2004}. \cite{Wolpert1992StackedGeneralization} and \cite{Breiman_1996b} propose the so-called stacking as an aggregation method. Stacking optimizes the weights of the individual predictors used at the aggregation level and generally provides better prediction accuracy than a single estimator. Later, \cite{BuhlmannMeinshausen2015Magging} offers a maximin aggregation approach, called magging, which inherits properties from bagging and stacking. In particular, magging uses a weighted average of the estimators, where subsets are constructed similarly to bagging, and estimator weights are optimized to solve a different optimization problem than the one for stacking. 

Although the bagging, stacking, and magging methods all successfully reduce the mean squared error of unstable training mechanisms via aggregation, they lack local proximity information of the samples with respect to training subsets. In this study, we offer a novel method that integrates the above ideas within a learning algorithm by using the proximity information of data points to localized subsets. Since our method works with local subsets of samples and resembles stacking by combining multiple learners in a global learner, we call our method ``LEarning with Subset Stacking'' or, in short, \LESS{}. As \LESS{} learns from the predictions of individual learners on the subsets, it can be considered a meta-learning algorithm. 

The following outlines how \LESS{} works. First, multiple subsets of the training dataset are formed. 
After that, a local predictor is trained over each subset. Following the local training step, all training samples are then transformed into a modified feature space. The modified feature for a sample is a vector of weighted predictions obtained for that training sample by the local predictors. This weighting is carried out in such a way that the local predictions for that sample are assigned importance based on the distance from their corresponding subsets. 
At the global level, an aggregate prediction is performed on the transformed feature space and the responses. As can be deduced from this description, \LESS{} is very generic and can be applied to most regression problems. The ultimate prediction model depends on the local learners, the distance-based scaling function, and the global learner.  See Figure \ref{fig:1level} for a schematic view of \LESS{}. 

\LESS{} has similarities with bagging, stacking, and magging, as well as the existing local learning or aggregation methods. Nevertheless, the mechanism for the selection of subsets and the adopted approach for global learning are different from each of those methods. \LESS{} can also be considered a local learning method. There are several local learning methods in the literature; a non-exhaustive list contains kernel regression \citep{Nadaraya_1964}, the locally estimated scatterplot smoothing \citep{Cleveland_1979}, Bayesian committee machine \citep{Tresp_2000, Liu_et_al_2018}, Gaussian process regression \citep{Rasmussen_and_Williams_2006}, mixture-of-experts methods \citep{Rasmussen_and_Ghahramani_2001, Meeds_and_Osindero_2005, Shahbaba_and_Neal_2009, Parelta_and_Soto_2014}, and locally linear ensemble for regression \citep{Kang_and_Kang_2018}. \LESS{} differs from each one of them in the way it creates the data subsets and combines the local predictions. By constructing a feature vector for each data point from the local predictions, \LESS{} is also related to feature learning methods \citep{Bengio_et_al_2013, Le-Khac_et_al_2020}. Parallelisms can also be made between \LESS{} and Fuzzy Inference Systems (FIS); see \textit{e.g.}, \cite{Mamdani_1974, Takagi_Sugeno_1985_FIS} for the foundations and \cite{Babuska_1998} for a book review, and \cite{Shihabudheenetal2018Review} for a survey on fuzzy inference systems. Especially, the adaptive neuro-fuzzy generalization of FIS (ANFIS), has been a popular framework for modeling non-homogeneous input-output relations \citep{Jang1993ANFIS, Zhang_2024}. The relation of ANFIS to \LESS{} is due to the existence of subsets and the way the final output is formed as a combination of the outputs of those subsets, in both models. However, \LESS{} and  ANFIS differ on a methodological level, especially in terms of the approach in the design and combination of their subsets. \LESS{} can also be transformed into a Takagi-Sugeno (TS) type model when both local and global learning steps are achieved by linear or convex programs. Nevertheless, \LESS{} brings different perspectives to subset creation, local and global learning as well as interpretability of the results. We discuss \LESS{} in relation to other methods in detail in \ref{sec: Related Methods}.

Our computational study on a set of regression problems has shown that \LESS{} is quite competitive among many well-known learning methods, including those mentioned above. 
Moreover, LESS lends itself to a parallel implementation in a straightforward manner thanks to its use of multiple subsets of the samples.


\section{LEarning with Subset Stacking} \label{sec: LEarning with Subset Stacking}

In this section, we first present the methodology behind \LESS{} generically. Then, we give an example of \LESS{} with some specific choices and discuss the effects of changing its hyperparameters. After discussing the impact of weighting on \LESS{}, we introduce two possible modifications and obtain two \LESS{} variants.

\subsection{General Framework} \label{sec: General Framework}
Suppose that we have the training dataset $\{(\vx_i, y_i) : i=1,\dots,n\}$, where $\vx_i \in \RR^p$, for some $p \geq 1$, stands for the input vector, and $y_{i} \in \mathbb{R}$ denotes the scalar output value for the data point $i \in \{1, \dots, n\}$. To simplify our exposition, we shall denote the dataset by $(\mX, \vy)$. Specifically, $\mX$ is an $n \times p$ matrix whose row $i$ is $\vx_{i}\tr$, and $\vy$ is the $n \times 1$ vector of outputs $y_{1}, \ldots, y_{n}$.

Figure \ref{fig:1level} shows a depiction of \LESS{}. In the subsequent part, we describe the steps of \LESS{} in the following order: subset selection and local learning, feature generation, global training, and averaging or boosting.

\begin{figure*}[t]
  \centering
  \includegraphics[width=\textwidth]{./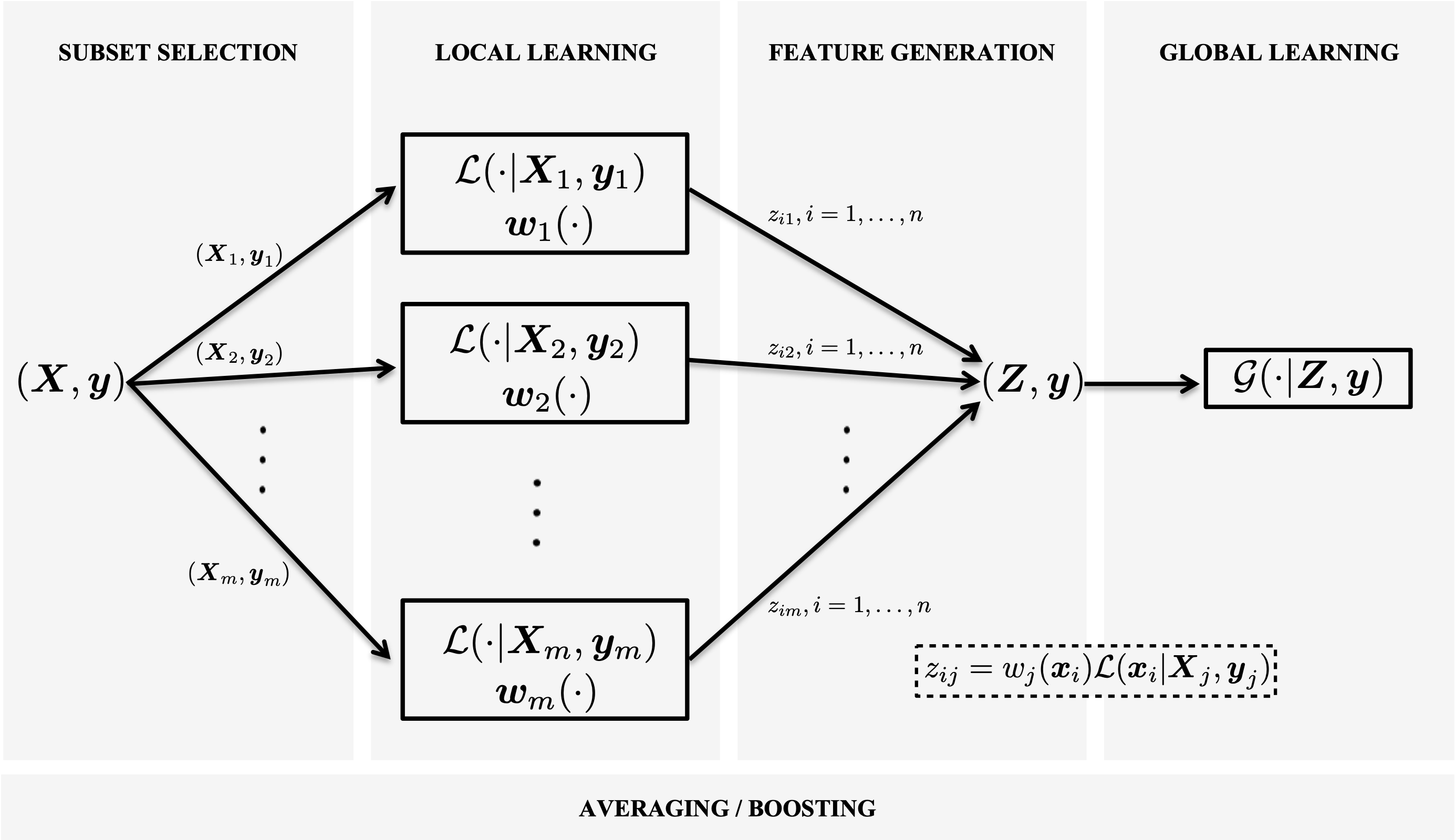}
  \caption{Schematic view of the basic \LESS{} algorithm.}
  \label{fig:1level}
\end{figure*}

{\noindent}\textbf{Subset Selection and Local Learning.} 
The first step of our learning algorithm is to generate a collection of subsets $(\mX_j, \vy_j) \subseteq (\mX, \vy)$, $j=1, \dots, m$. We do not restrict to partitioning the dataset into mutually exclusive, collectively exhaustive subsets. Different subsets may have common samples, or there may exist a sample that does not belong to any one of the subsets. The core idea of our approach is to train a different model with the data points in each subset. That is, for a given input point $\vx$, we obtain its prediction by training a learning method \textit{locally} with the subset $(\mX_j, \vy_j)$. We denote this local prediction by $\CL(\vx | \mX_j, \vy_j)$. 

It is worth noting that the method requires the selected subsets to be localized. Localness is necessary for the trained model to capture the input-output relation in the locality of the center of the subset. To ensure localness, we first randomly select $m$ input points from $\mX$ as \textit{anchors}, and then for each anchor, we construct a subset by selecting $k$-nearest samples to the anchor. 
Clearly, the performance of this simple method depends on the random location of the subsets $(\mX_j, \vy_j)$, $j=1, \dots, m$. While this method is straightforward to implement, a principled alternative could be using clustering techniques, such as $k$-means or spectral clustering. We shall discuss such variants in \ref{sec: Variants}.

Note that each local model is trained independently of the others. Therefore, the local models can be trained in a trivially parallel fashion. Moreover, each local model works with only a subset of data samples; hence, it is faster to train each one of them. 

{\noindent}\textbf{Feature Generation.} 
One expects that prediction at an input point $\vx$ is more reliable when a sample is \textit{closer} to set $\mX_j$. Thus, we also define a weighting function $w_{j}(\vx)$ that is inversely related to the distance between the sample $\vx$ and the subset $\mX_{j}$. Then, for each input $\vx_{i}$ in the dataset, we generate a \textit{feature vector} of weighted predictions:
\begin{equation} \label{eqn:xhat}
\vz(\vx_{i}) = [w_{1}(\vx_{i}) \CL(\vx_i |  \mX_1, \vy_1), \dots, w_{m}(\vx_{i}) \CL(\vx_i |  \mX_m, \vy_m)] \tr.
\end{equation}
To simplify our notation in the subsequent part, we will refer to $\vz(\vx_{i})$ as vector $\vz_i$ with the components $z_{ij} = w_{j}(\vx_{i}) \CL(\vx_i |  \mX_j, \vy_j)$ for $j=1, \dots, m$.

The weights $w_{j}(\vx)$ are based on a distance metric $d(\cdot, \cdot): \mathbb{R}^{p} \times  \mathbb{R}^{p} \mapsto [0, \infty)$ such that $w_{j}(\vx)$ is inversely related to the distance $d(\vx, \bar{\vx}_{j})$, where we define $\bar{\vx}_{j}$ to be the centroid of subset $j$. One way to choose the weights, which we also adopt in this work, can be
\begin{equation} \label{eq: distance-based weights}
w_{j}(\vx) = \frac{\exp\left\{-\lambda d(\vx, \bar{\vx}_{j}) \right\}}{\sum_{j' = 1}^{m} \exp\left\{-\lambda d(\vx, \bar{\vx}_{j'}) \right\} }
\end{equation}
for some $\lambda \geq 0$. We note that $\bar{\vx}_{j}$ can also be any representative point in the subset $\mX_j$, other than its centroid, that could be used to determine the distance. The parameter $\lambda$ determines the extent to which the distance factor should be taken into account. If $\lambda$ is small, the predictor weights are closer to each other; if $\lambda$ is large, then the closer predictors are given more weights than the farther ones. In Section \ref{sec: Effect of Weighting} we give a discussion on two extreme choices of $\lambda$. The parameter $\lambda$ can be tuned by using the training data, \textit{e.g.}, with cross-validation.

{\noindent}\textbf{Global Learning.} 
The next step is to train a \textit{global} learning method that results in a mapping from the generated feature vector $\vz_{i}$ to the output value $y_{i}$ for $i = 1, \ldots, n$. Let $\mZ$ be the $n \times m$ matrix, whose row $i$ is the feature vector $\vz_{i}\tr$ for input $i$. The output of this global method, when fed by a feature vector $\vz$, can be denoted as $\CG(\vz | \mZ, \vy)$. After \LESS{} is trained, the prediction for a test point $\vx_0 \in \RR^p$ simply becomes
\[
\hat{y}_0 = \CG(\vz_0 | \mZ, \vy), \quad \text{with} \quad z_{0j} = w_{j}(\vx_0) \CL(\vx_0 | \mX_{j}, \vy_{j}), \quad j = 1, \ldots, m.
\]


{\noindent}\textbf{Averaging and Boosting.} 
When the subsets are chosen randomly, \textit{e.g.}, via sampling anchor points, \LESS{} can also benefit from averaging over repetitions to reduce the variance. Averaging is a typical approach applied by ensemble methods. For \LESS{}, this simply corresponds to applying the procedure described above $b$ times and taking the average. If we denote the replication $\ell$ with a superscript, then the overall prediction for a test point $\vx_0 \in \RR^p$ with averaging leads to
\[
\hat{y}_0 = \frac{1}{b} \sum_{\ell = 1}^{b} \CG(\vz^{(\ell)}_0 | \mZ^{(\ell)}, \vy),  \text{ with } z^{(\ell)}_{0j} = w_{j}^{(\ell)}(\vx_0)\mathcal{L}(\vx_0 | \mX_{j}^{ (\ell)}, \vy_{j}^{(\ell)}); \, j = 1, \ldots, m.
\]

Boosting works slightly differently from averaging and benefits from sequentially fitting a global model on the subsequent residuals. For \LESS{}, this simply corresponds to applying the procedure described above $b$ times on the residuals, denoted with $\vr^{(\ell)}$ for $\ell = 1, \ldots, b$ with $\vr^{(1)} = \vy$ and superscript $\ell$ again standing for the replication. The prediction vectors in the sequence are accumulated over the replications and depend recursively on the previous residuals multiplied by a learning rate. That is, for each input $\vx_i$, we have 
\[
r_i^{(\ell+1)} = r_i^{(\ell)} - \rho \, \CG( \vz_i^{(\ell)} | \mZ^{(\ell)}, \vr^{(\ell)}), \ \  \ell = 1, \dots, b-1,
\]
where $\mZ^{(\ell)}$ is an $n \times m$ matrix whose $i$th row is $\vz_{i}^{(\ell) \top}$, and the latter consists of the components $z_{ij}^{(\ell)} = w_{j}(\vx_{i}) \CL(\vx_i |  \mX_j^{ (\ell)}, \vr_j^{(\ell)})$ for $j=1, \dots, m$. The parameter $\rho > 0$ is the learning rate, and $r_1^{(\ell)}, \dots, r_n^{(\ell)}$ stand for the components of the residual vector $\vr^{(\ell)}$. After this sequential training, the overall prediction for a test point $\vx_0 \in \RR^p$ becomes
\[
\hat{y}_0 = \CG(\vz^{(1)}_0 | \mZ^{(1)}, \vy) + \rho \sum_{\ell = 2}^{b} \CG(\vz^{(\ell)}_0 | \mZ^{(\ell)}, \vr^{(\ell)}),
\]
where we again use the previous notation $z^{(\ell)}_{0j} = w_{j}^{(\ell)}(\vx_0)\mathcal{L}(\vx_0 | \mX_{j}^{ (\ell)}, \vr_{j}^{(\ell)})$ for $j = 1, \ldots, m$. Observe that this is a simplified version of the boosting idea. Here, the gradient information is not elaborately used (\textit{e.g.}, no second-order information or other loss functions). Instead, \LESS{} consecutively fits global models on the current residuals, aiming to minimize the final prediction mean squared error. Therefore, we stick to the term boosting to distinguish this \LESS{} variant from the \LESS{} variant with averaging.

The general framework of \LESS{}, described above, allows obtaining different variants by adopting different choices in its steps. Those structural choices include the choice of local and global models, splitting the data for local and global learning steps, subset selection techniques, and the way multiple repetitions are exploited. While we numerically explore several variants in Section \ref{sec: Computational Experiments}, we discuss all the structural choices more comprehensively in \ref{sec: Variants}.

\subsection{Linear Models as Local and Global Learners} \label{sec: Linear Models as Local and Global Learners}
Let us explain \LESS{} in a simple scenario where the local and global learners are all linear.
We assume that either the data is centered or, equivalently, every $\vx_{i}$ has its first element equal to one for the model to accommodate an intercept parameter. We consider the standard linear regression model for training local models, that is, $\CL(\vx | \mX_j, \vy_j) = \vv_{j}\tr \vx$ for some $\vv_{j} \in \RR^p$. For instance, assuming $\mX_j$ is full rank\footnote{When $\mX_j$'s are rank deficient, a regularizing parameter $\eta > 0$ can be used to obtain $\vv_{j} = (\mX_j\tr \mX_j + \eta I)\inv \mX_{j}\tr \vy_j$ instead.}, the ordinary least squares solution leads to
$\vv_j = (\mX_j\tr \mX_j)\inv \mX_{j}\tr \vy_j$. Using these parameters, we obtain
\[
z_{ij} = \vv_j\tr \vx_{i} w_{j}(\vx_{i}), \quad i=1, \dots, n; ~ j=1, \dots, m.
\]
For the global learning method $\CG(\vz | \mZ, \vy)$, we again apply the standard linear regression model, but this time regress on the feature vector $\vz$. If we denote the ordinary least squares solution with $\vbeta = [\beta_1, \dots, \beta_m]\tr$, then provided $\mZ$ is full rank, we can write
\begin{equation} \label{eq: linear global beta}
\vbeta = (\mZ\tr \mZ)^{-1} \mZ\tr \vy. 
\end{equation}
Writing the overall prediction at a test point $\vx_0$ as
\begin{equation}
    \hat{y}_0  =  \sum_{j=1}^m (\vv_j\tr \vx_0) [ w_{j}(\vx_0) \beta_j], \label{eq: global learner as linear combination}
  \end{equation}
we see that the overall prediction is a \textit{locally linear combination} of the local predictions, where the coefficient for the $j$'th predictor is $w_{j}(\vx) \beta_j$, which depends both on the proximity of the test point $\vx_{0}$ to the training subset $\mX_j$ and the associated regression coefficient of the $j$'th local model within the global predictor. A related interpretation stems from rewriting the above as
\begin{equation}
\hat{y}_0  = \vx_0\tr \left[\sum_{j=1}^m \vv_j w_{j}(\vx_0) \beta_j \right], \label{eq: local interpretation}
\end{equation}
which implies that the prediction for a test sample is \textit{also} a locally linear combination of $\vx_0$. Here, the coefficients of the linear combination are locally determined, that is, they depend on $\vx_0$. It is important to note that whenever linear models (\textit{e.g.}, ridge regression, Lasso, elastic net) are used for local and global learners, we obtain a sample-dependent weight for each feature as in \eqref{eq: local interpretation}. This immediately lends itself to local explanations for the test samples. Consequently, after training \LESS{} with linear models, we obtain feature attributions for local explanation without the need for post-hoc methods like the infamous SHAP and its extensions \citep{NIPS2017_7062}.

{\noindent}\textbf{Numerical demonstration.} 
Figure \ref{fig:1dex} illustrates how \LESS{} works, as described in this example, on a dataset generated by adding noise to a trigonometric function. Blue circles represent the samples used for training and orange circles show the samples in the test set. The black line segments in the figure are the local predictions obtained by fitting the linear regressors for each sample subset $\mX_j$. These predictions can be seen to follow the general pattern of the dataset. For the global method, we have again applied the standard linear regression model to the transformed data.

\begin{figure}[ht]
\centerline{
\begin{minipage}{0.6\textwidth}
  \includegraphics[scale = 0.35]{./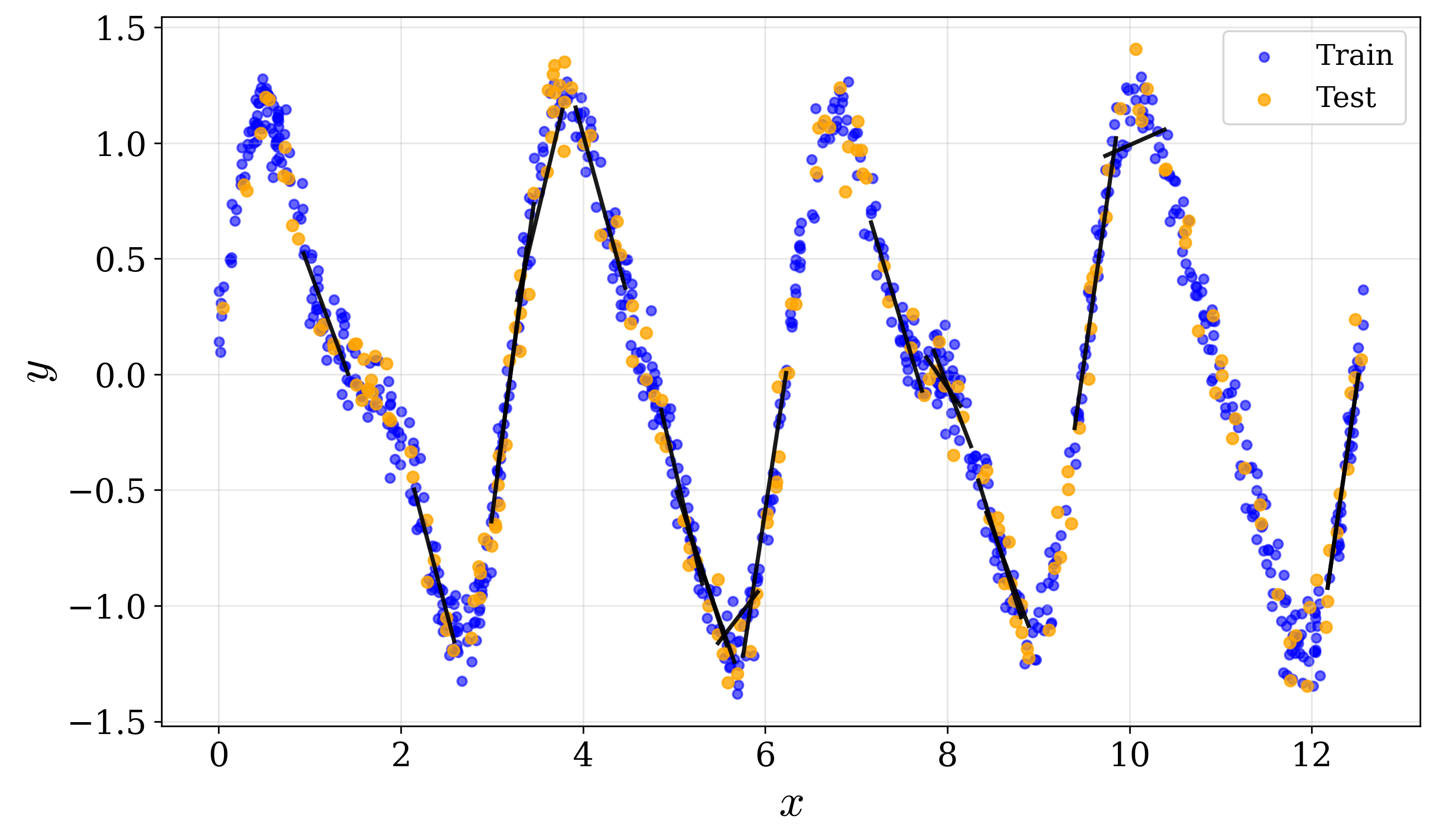}
  \end{minipage}
  \hfill
  \begin{minipage}{0.35\textwidth}
  \caption{Demonstration of local models in \LESS{} on one-dimensional synthetic dataset.}
  \label{fig:1dex}
  \end{minipage}
  }
\end{figure}

Figure \ref{fig:mvsr} illustrates how the number of subsets ($m$) and the number of replications ($b$) for averaging change the behavior of \LESS{}. The original unknown model is the same trigonometric function used in Figure~\ref{fig:1dex}. We first train each \LESS{} model using $n=200$ randomly generated samples with the specified hyperparameter values for $m$, $b$, and $k=n/m$. The subsets are constructed by randomly sampling $m$ anchor points as we discussed in Section \ref{sec: General Framework}. The trained models are used to plot the corresponding subfigures in Figure \ref{fig:mvsr}. In each row of the figure, we notice that increasing the number of subsets assists \LESS{} in identifying the nonlinear structure of the unknown model. However, the last column also shows that when the number of subsets is too high, the variance also increases. This can be considered overfitting. We observe the smoothing effect of averaging as the number of replications increases at each column of the figure. For the example problem, the unknown function is closely approximated when the number of subsets and the number of replications are both 20.

\begin{figure}[h!]
\centerline{
\begin{minipage}{0.25\textwidth}
\begin{tikzpicture}
  \node (img11)  {\includegraphics[scale=0.15]{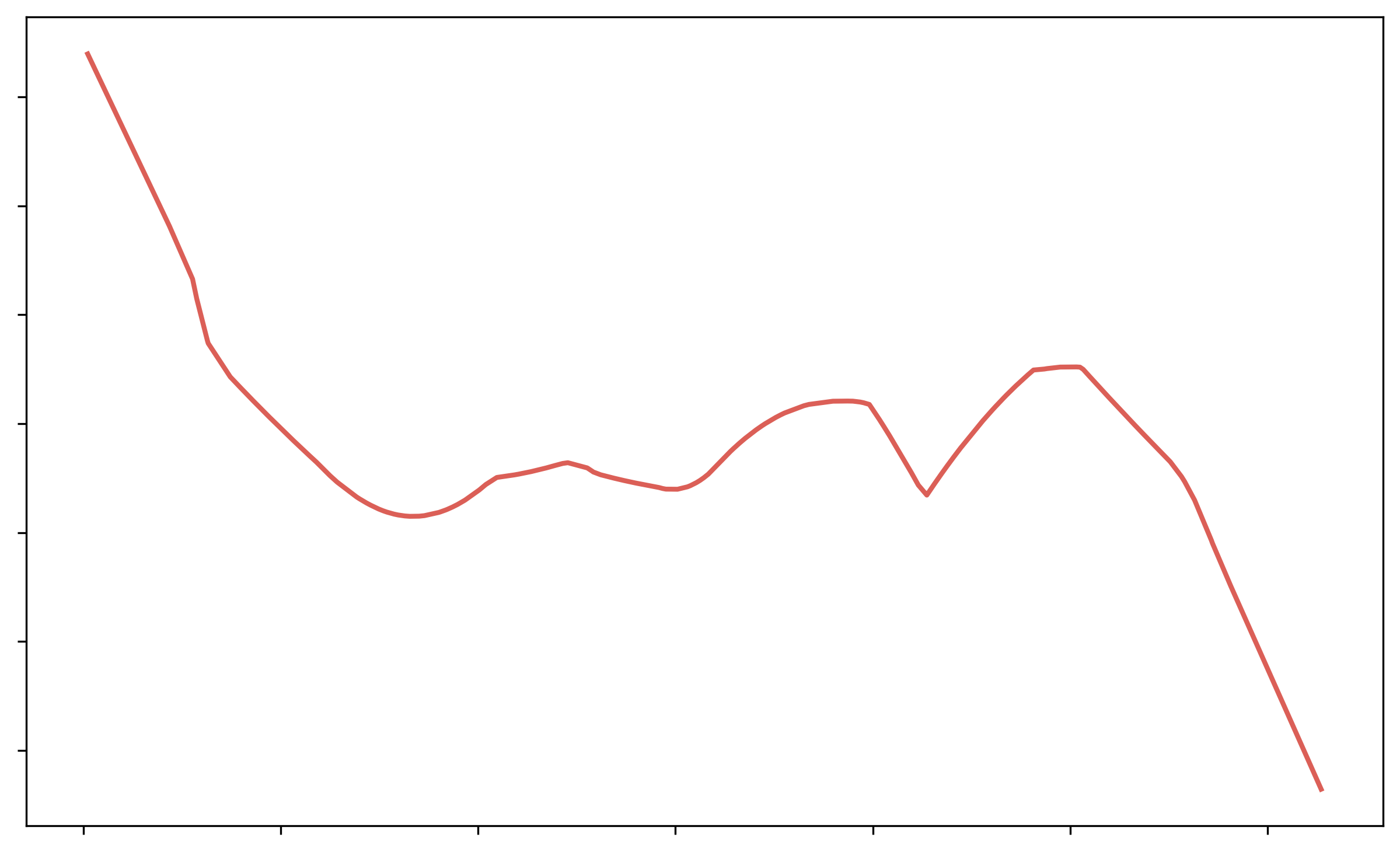}};
  \node[below=of img11, yshift=1cm] (img21)  
{\includegraphics[scale=0.15]{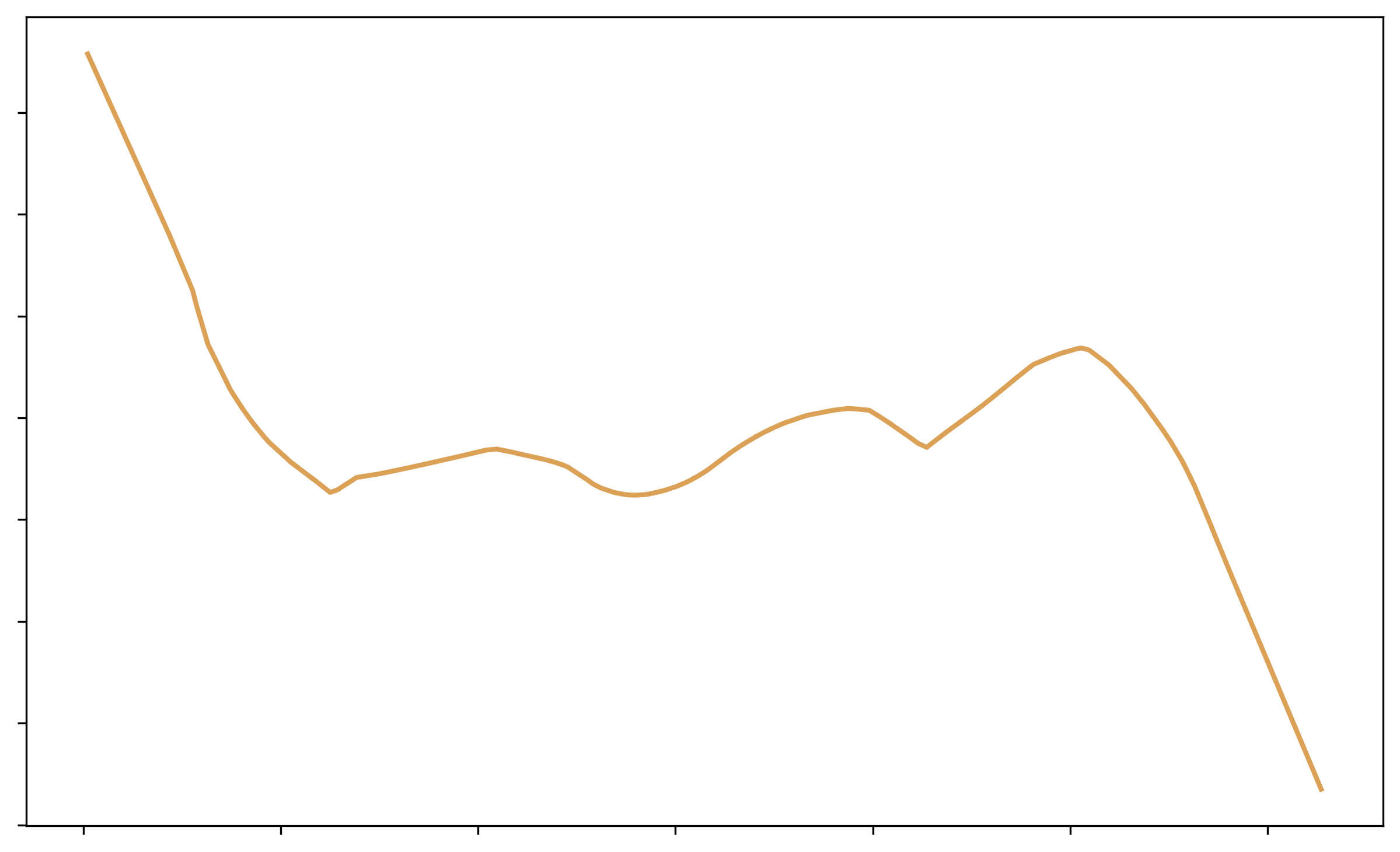}};
  \node[below=of img21, yshift=1cm] (img31)  {\includegraphics[scale=0.15]{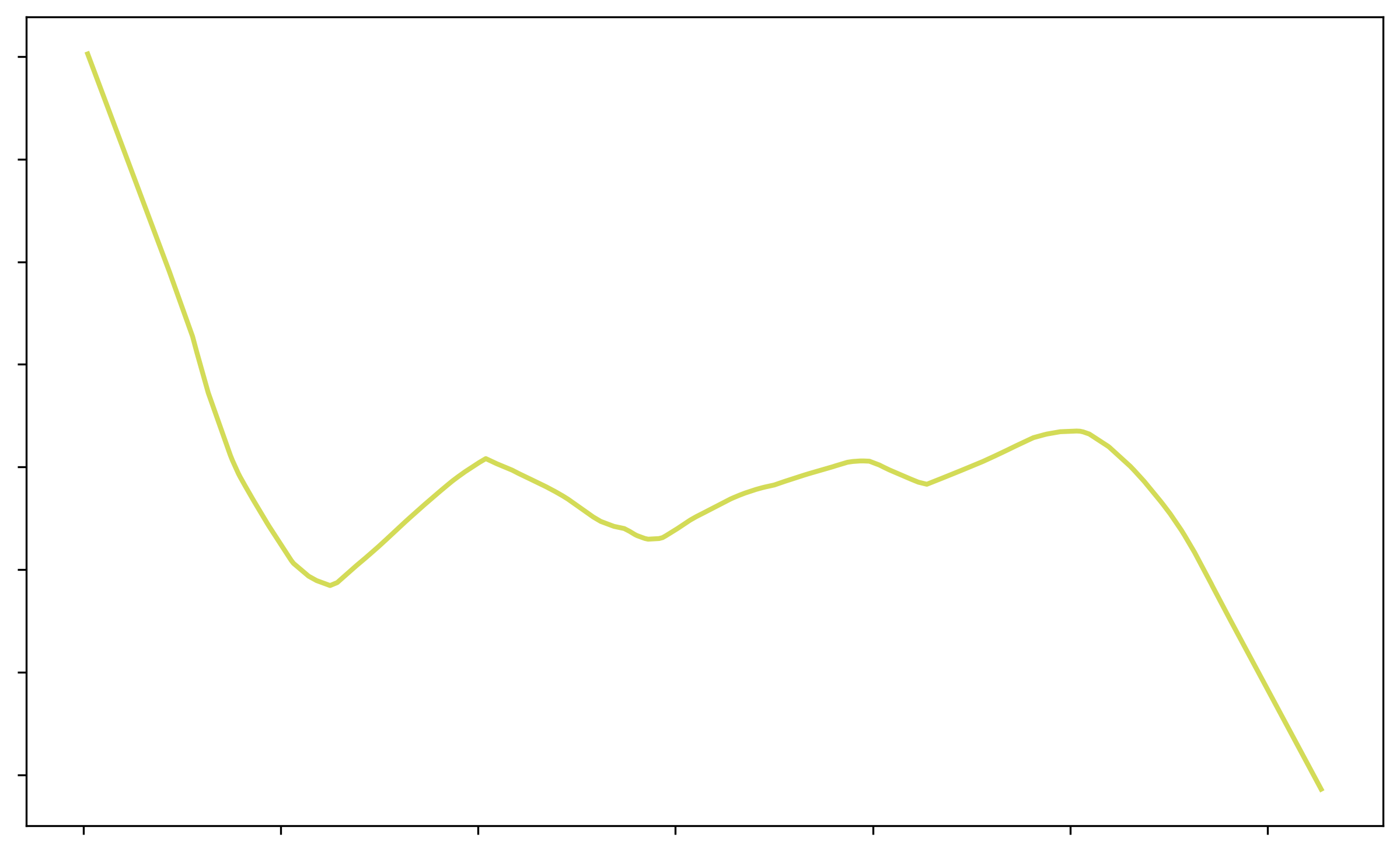}};
  \node[above=of img11, node distance=0cm, rotate=0, anchor=center,yshift=-1cm,font=\color{black}] {\scriptsize{$m = 5, k = 40, r = 5$}};
  \node[above=of img21, node distance=0cm, rotate=0, anchor=center,yshift=-1cm,font=\color{black}] {\scriptsize{$m = 5, k = 40, r = 10$}};
\node[above=of img31, node distance=0cm, rotate=0, anchor=center,yshift=-1cm,font=\color{black}] {\scriptsize{$m = 5, k = 40, r = 20$}};
 \end{tikzpicture}
\end{minipage}%
\hspace{0.2 cm}
\begin{minipage}{0.25\textwidth}
\begin{tikzpicture}
  \node (img12)  {\includegraphics[scale=0.15]{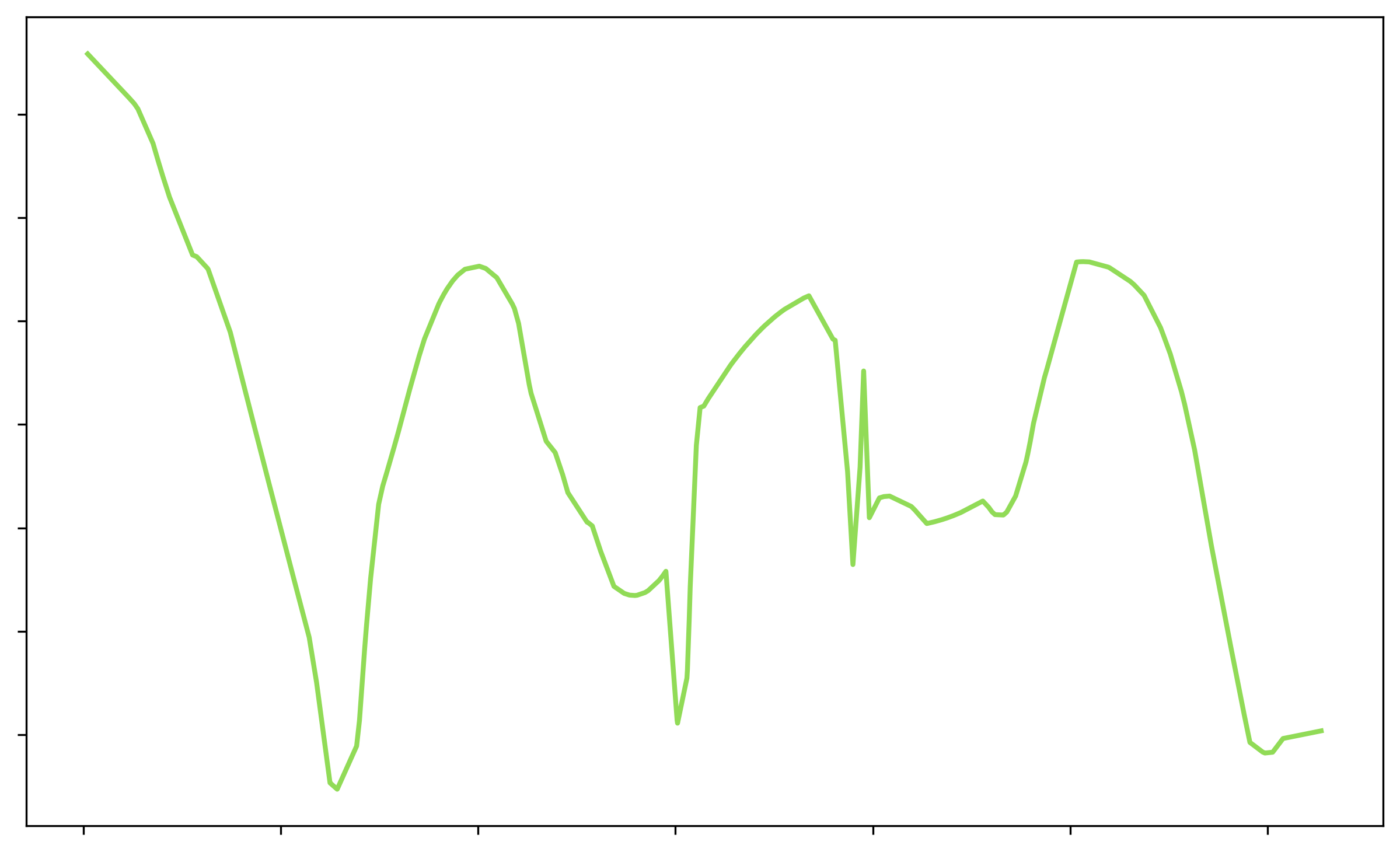}};
  \node[below=of img12, yshift=1cm] (img22)  {\includegraphics[scale=0.15]{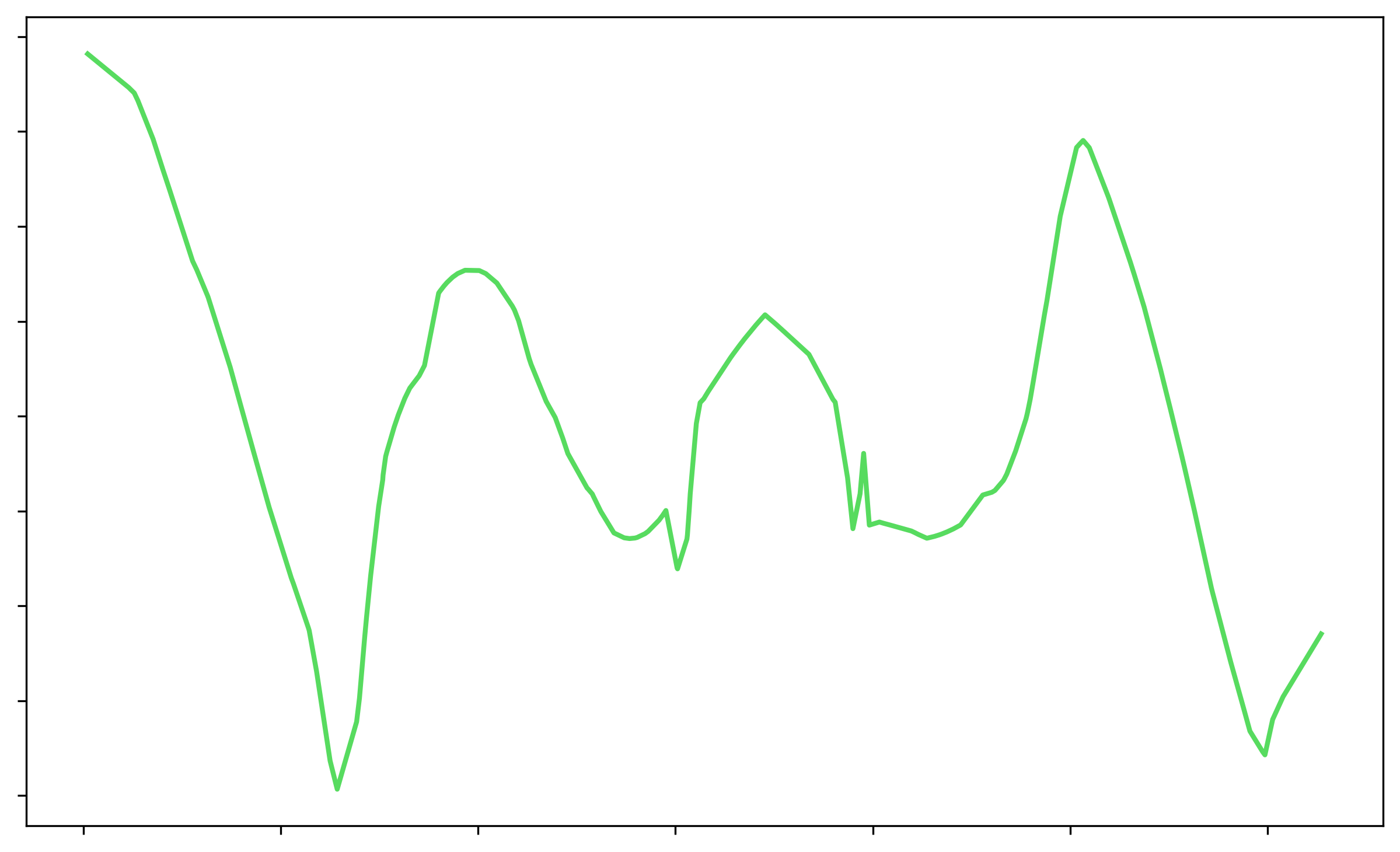}};
  \node[below=of img22, yshift=1cm] (img32)  {\includegraphics[scale=0.15]{figures/v3/trigo/LESSAVRegressor_sub10_min20_est10.png}};
  \node[above=of img12, node distance=0cm, rotate=0, anchor=center,yshift=-1cm,font=\color{black}] {\scriptsize{$m = 10, k = 20, r = 5$}};
  \node[above=of img22, node distance=0cm, rotate=0, anchor=center,yshift=-1cm,font=\color{black}] {\scriptsize{$m = 10, k = 20, r = 10$}};
\node[above=of img32, node distance=0cm, rotate=0, anchor=center,yshift=-1cm,font=\color{black}] {\scriptsize{$m = 10, k = 20, r = 20$}}; \end{tikzpicture}
\end{minipage}%
\begin{minipage}{0.25\textwidth}
\begin{tikzpicture}
  \node (img13)  {\includegraphics[scale=0.15]{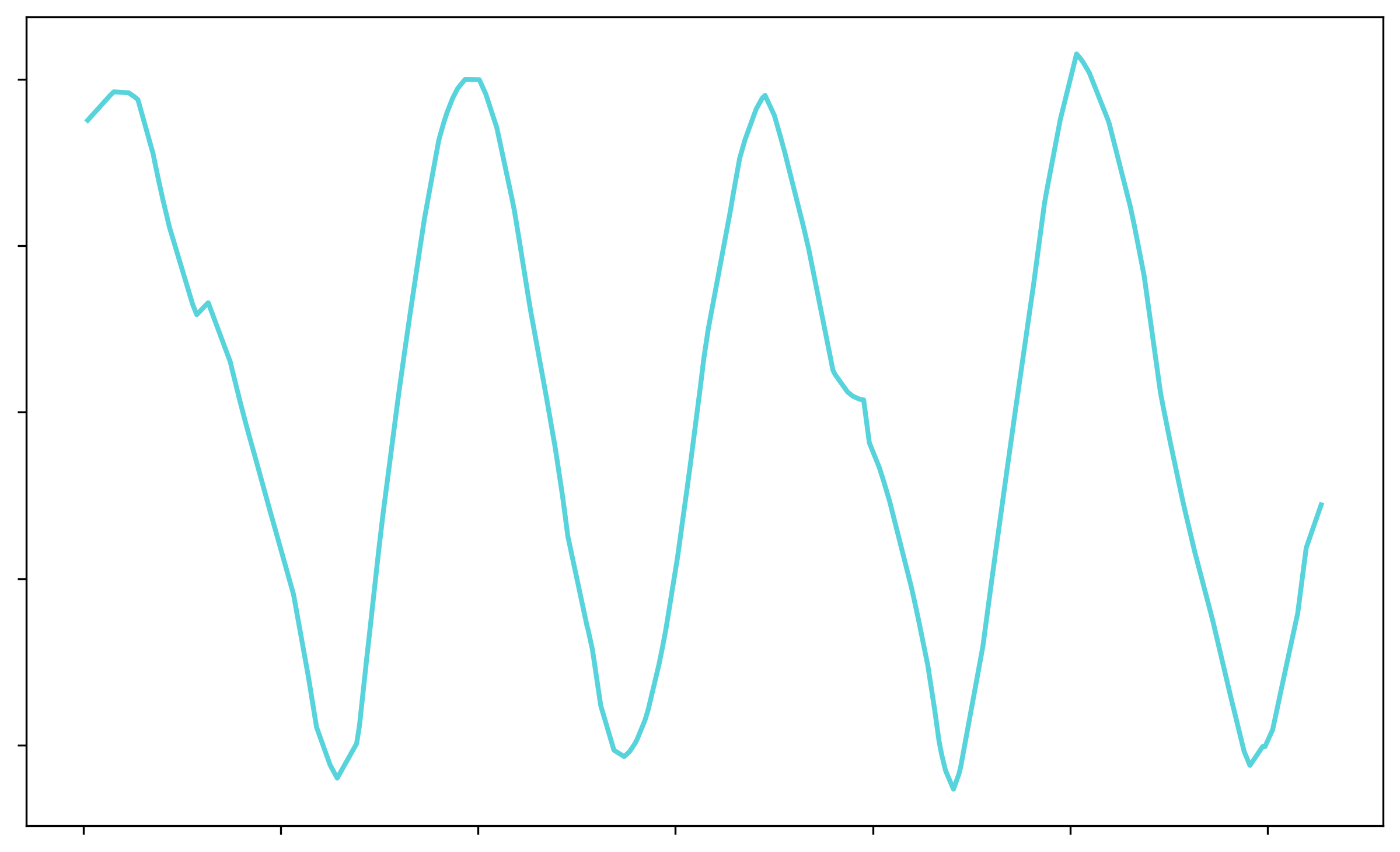}};
  \node[below=of img13, yshift=1cm] (img23)  {\includegraphics[scale=0.15]{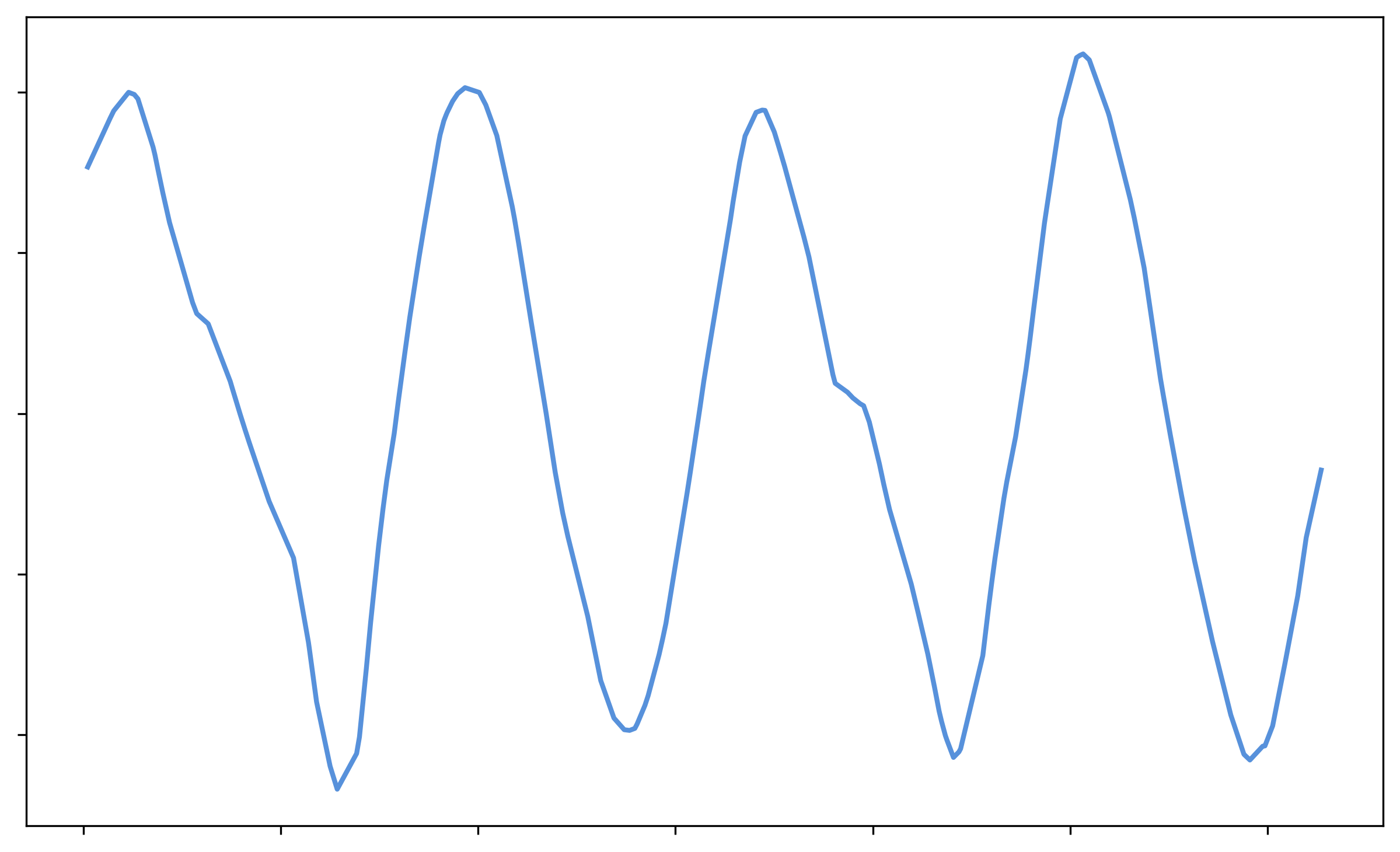}};
  \node[below=of img23, yshift=1cm] (img33)  {\includegraphics[scale=0.15]{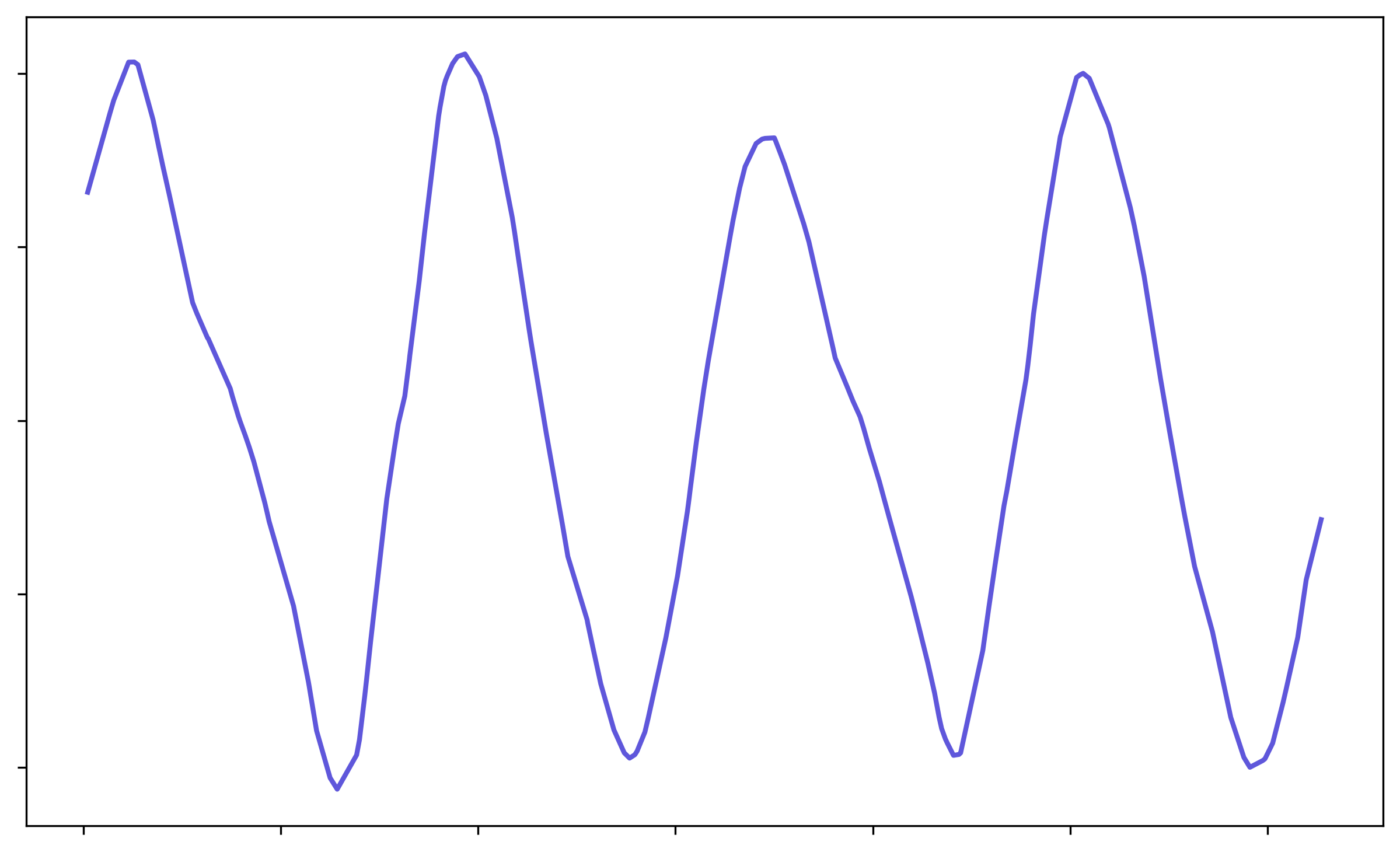}};
  \node[above=of img12, node distance=0cm, rotate=0, anchor=center,yshift=-1cm,font=\color{black}] {\scriptsize{$m = 20, k = 10, r = 5$}};
  \node[above=of img22, node distance=0cm, rotate=0, anchor=center,yshift=-1cm,font=\color{black}] {\scriptsize{$m = 20, k = 10, r = 10$}};
\node[above=of img32, node distance=0cm, rotate=0, anchor=center,yshift=-1cm,font=\color{black}] {\scriptsize{$m = 20, k = 10, r = 20$}}; 
 \end{tikzpicture}
\end{minipage}%
\begin{minipage}{0.25\textwidth}
\begin{tikzpicture}
  \node (img14)  {\includegraphics[scale=0.15]{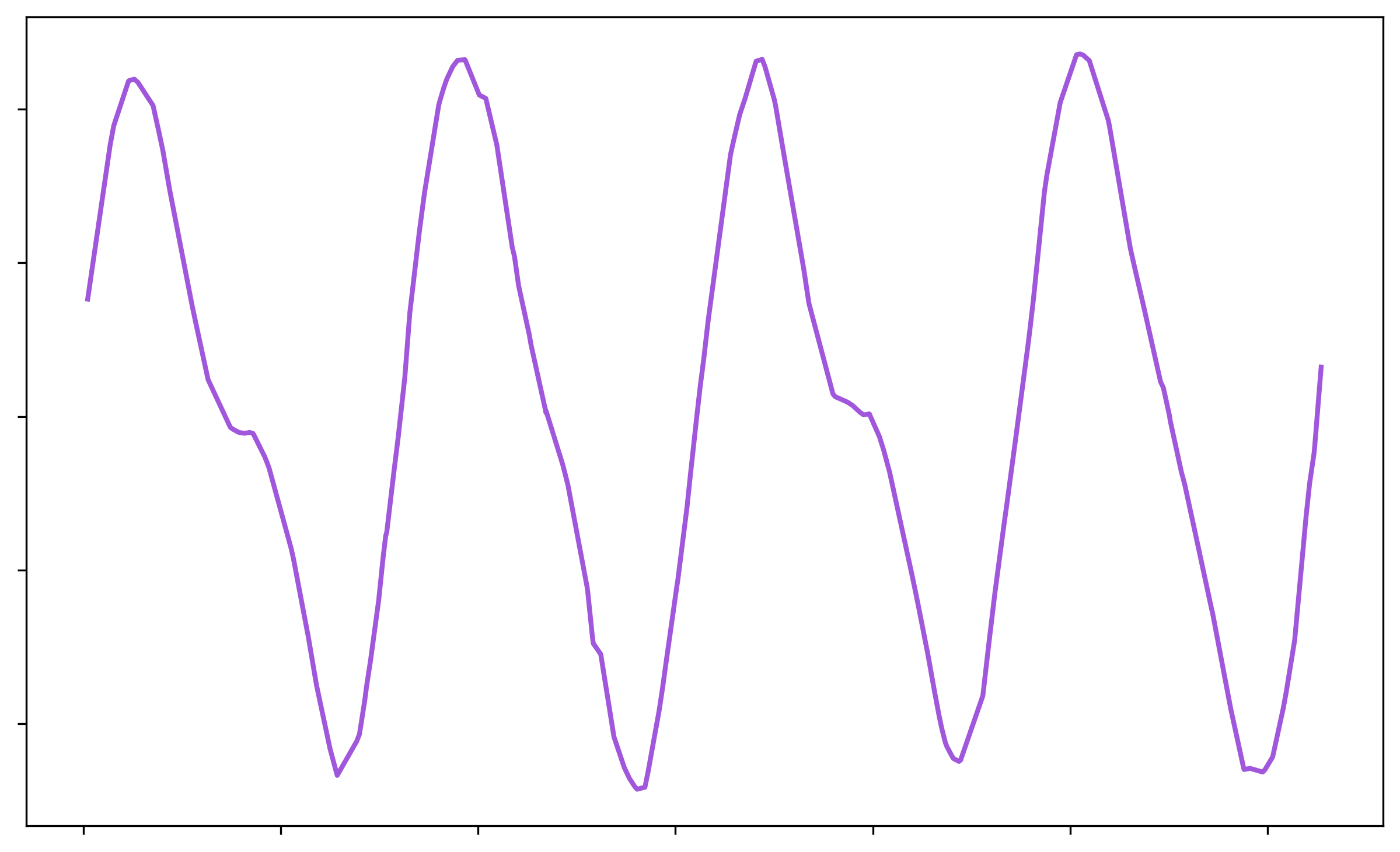}};
  \node[below=of img14, yshift=1cm] (img24)  {\includegraphics[scale=0.15]{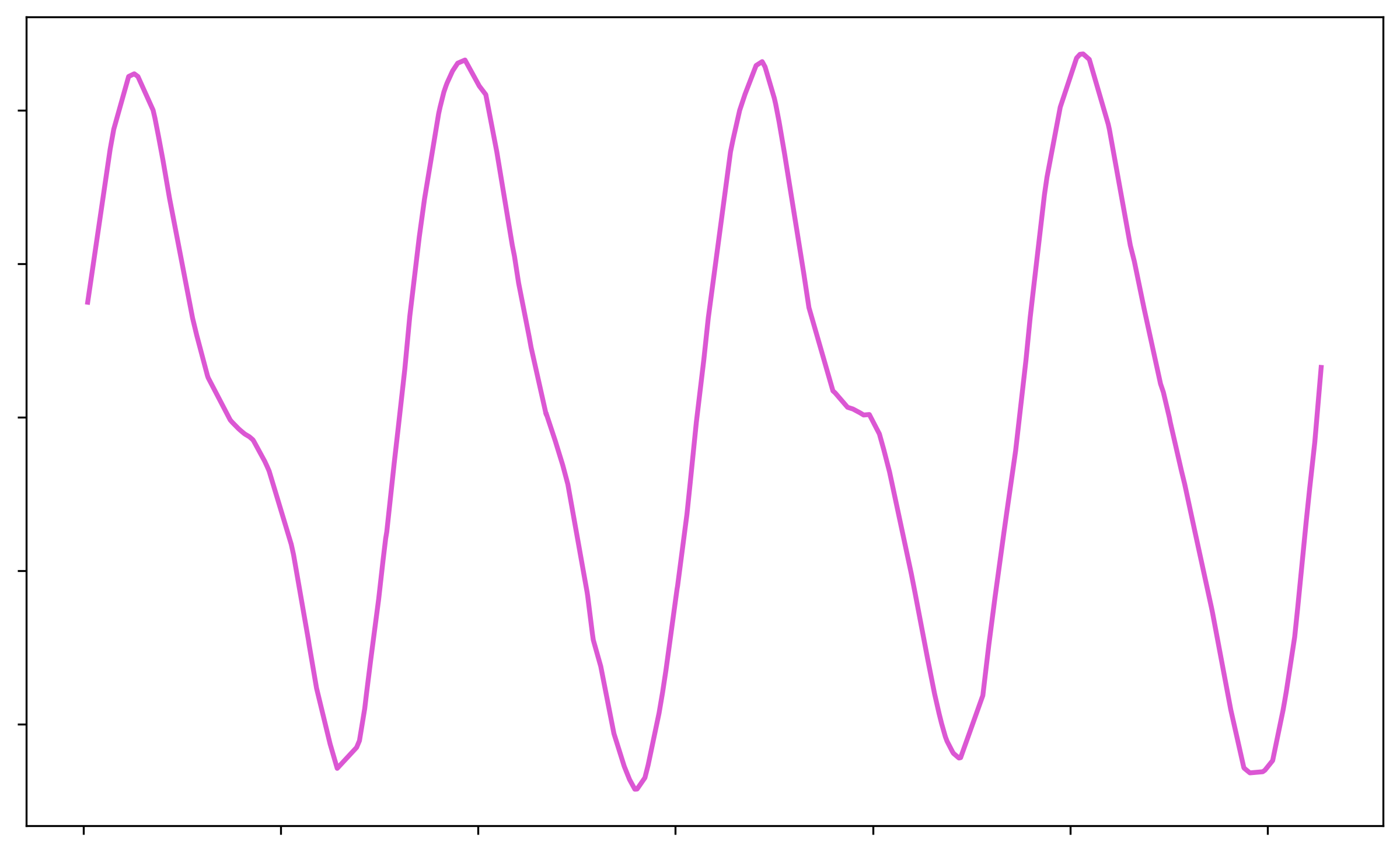}};
  \node[below=of img24, yshift=1cm] (img34)  {\includegraphics[scale=0.15]{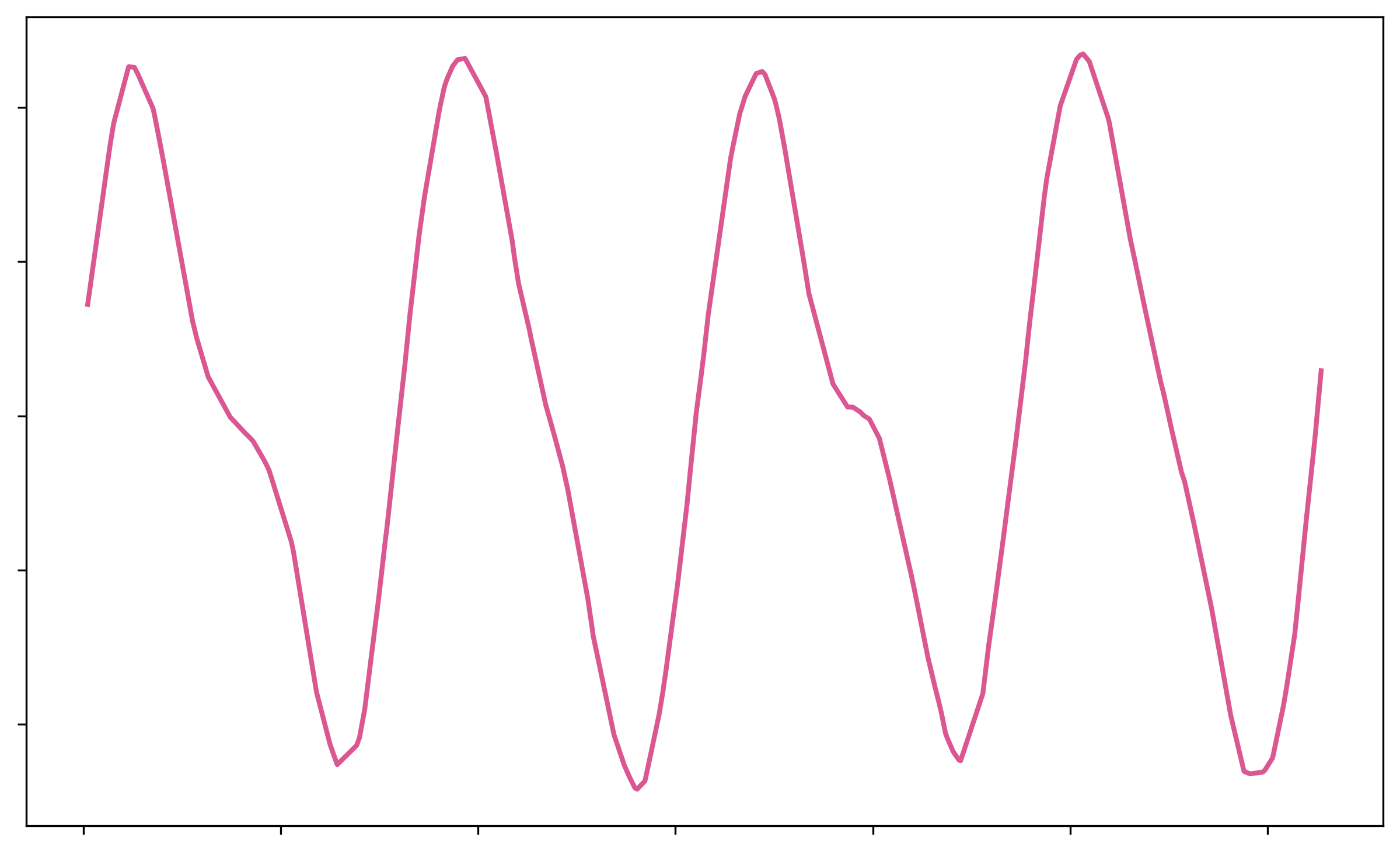}};
  \node[above=of img12, node distance=0cm, rotate=0, anchor=center,yshift=-1cm,font=\color{black}] {\scriptsize{$m = 50, k = 4, r = 5$}};
  \node[above=of img22, node distance=0cm, rotate=0, anchor=center,yshift=-1cm,font=\color{black}] {\scriptsize{$m = 50, k = 4, r = 10$}};
\node[above=of img32, node distance=0cm, rotate=0, anchor=center,yshift=-1cm,font=\color{black}] {\scriptsize{$m = 50, k = 4, r = 20$}};
 \end{tikzpicture}
\end{minipage}%
}
  \caption{The effects of the number of replications ($b$), the number of subsets ($m$), and the number of neighbors ($k$). Notice that $n = m \times k$. The subsets are constructed by randomly selecting $m$ anchor points.}
  \label{fig:mvsr}
\end{figure}


\subsection{Effect of Weighting} \label{sec: Effect of Weighting}
Recall that the feature vector $\vz_{i}$ for the input point $\vx_{i}$ is constructed such that its $j$'th element is $z_{ij} = w_{j}(\vx_{i}) \CL(\vx_i |  \mX_j, \vy_j)$, where $w_{j}(\vx_{i})$ is a distance-based weight for subset $j$ at the input point $\vx_{i}$. In this section, we investigate the effect of the weighting function on the behavior of \LESS{}. For analysis's sake, assume that linear regression is used for both the local and the global learning phases. Recall that this leads to
\[
\CL(\vx_i |  \mX_j, \vy_j) = \vx_{i}\tr \vv_{j} ~ \text{ with } ~ \vv_{j} = (\mX_{j}\tr \mX_{j})^{-1} \mX_{j}\tr \vy_{j}.
\]
for the local models and to the prediction
\[
\hat{y}_0  =  \sum_{j=1}^m (\vv_j\tr \vx_0) [ w_{j}(\vx_0) \beta_j] ~ \text{ with } ~ \vbeta = (\mZ\tr \mZ)^{-1} \mZ\tr \vy
\]
for the global model. We specifically consider the weighting function in \eqref{eq: distance-based weights}, although it will become clear that the subsequent discussion can be applied to many other sensible choices for $w_{j}(\vx)$. In \eqref{eq: distance-based weights} we have $w_{j}(\vx) \propto \exp\{-\lambda d(\vx, \bar{\vx}_{j}) \}$ with $\sum_{j = 1}^{m} w_{j}(\vx) = 1$ for all $\vx$. Therefore, the hyperparameter $\lambda \geq 0$ controls the amount of impact of the subsets on each other. 

We discuss the two extreme cases for the hyperparameter $\lambda = 0$ and $\lambda \rightarrow \infty$. The first extreme case is $\lambda = 0$, which removes the effect of the distance function.
\begin{proposition}\label{proposition::OLS}
Let $\mV$ be the $p \times m$ matrix whose $j$'th column is $\vv_{j}$. If $\mV$ has rank $p$, then \LESS{} reduces to ordinary least squares (OLS) with $\lambda = 0$.
\end{proposition}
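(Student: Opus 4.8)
The plan is to specialize the default linear-on-linear \LESS{} of Section~\ref{sec: Linear Models as Local and Global Learners} to $\lambda = 0$, and then to recognize the resulting global predictor as the orthogonal projection onto the column space of $\mX$. First I would substitute $\lambda = 0$ into the weighting function \eqref{eq: distance-based weights}: every exponential collapses to $1$, so $w_{j}(\vx) = 1/m$ for all $j$ and all $\vx$, independently of the distances. The feature entries then become $z_{ij} = \tfrac{1}{m}\vv_{j}\tr\vx_{i}$, which in matrix form reads $\mZ = \tfrac{1}{m}\mX\mV$. Feeding this into \eqref{eq: local interpretation} shows that \LESS{} outputs $\hat{y}_0 = \vx_0\tr\bigl(\tfrac{1}{m}\mV\vbeta\bigr)$, i.e. a genuinely global linear rule with a coefficient $\bm{w}^{\star} = \tfrac{1}{m}\mV\vbeta$ that no longer depends on $\vx_0$. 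The goal thus reduces to showing that $\bm{w}^{\star}$ agrees with the OLS coefficient $(\mX\tr\mX)\inv\mX\tr\vy$.

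The key step I would use is a column-space identity. Because $\mV$ is $p\times m$ of rank $p$, its columns span all of $\RR^{p}$, so $\mathrm{col}(\mZ) = \mX\,\mathrm{col}(\mV) = \mX\,\RR^{p} = \mathrm{col}(\mX)$. Since the global OLS step returns the orthogonal projection of $\vy$ onto $\mathrm{col}(\mZ)$, and this space equals $\mathrm{col}(\mX)$, the projection is identical to the OLS fit. In particular $\mX\bm{w}^{\star} = \mZ\vbeta = \mX(\mX\tr\mX)\inv\mX\tr\vy$, so the two rules agree on every training point; assuming $\mX$ has full column rank we may cancel $\mX$ to conclude $\bm{w}^{\star} = (\mX\tr\mX)\inv\mX\tr\vy$, which upgrades the agreement to equality of the prediction functions everywhere.

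When $m = p$ and $\mV$ is invertible, I would note that the same conclusion also follows by a short direct computation: $\vbeta = m(\mV\tr\mX\tr\mX\mV)\inv\mV\tr\mX\tr\vy$, and substituting into $\bm{w}^{\star}=\tfrac{1}{m}\mV\vbeta$ the factors $\mV$ and $\mV\tr$ telescope, leaving $(\mX\tr\mX)\inv\mX\tr\vy$. This serves as a sanity check, but I prefer the column-space route because it does not hinge on invertibility of $\mV$.

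The main obstacle is exactly that invertibility issue: the hypothesis $\mathrm{rank}(\mV)=p$ forces $m\ge p$, so whenever $m>p$ the matrix $\mZ=\tfrac{1}{m}\mX\mV$ has rank at most $p<m$ and is rank deficient. The explicit formula $\vbeta=(\mZ\tr\mZ)\inv\mZ\tr\vy$ from \eqref{eq: linear global beta} then fails to apply and $\vbeta$ is no longer unique. This is precisely why I would argue through the column space and the orthogonal projection, quantities that stay well defined under rank deficiency, rather than manipulating the normal-equation inverse; the non-uniqueness of $\vbeta$ is harmless, since $\bm{w}^{\star}=\tfrac{1}{m}\mV\vbeta$ and hence $\mX\bm{w}^{\star}$ are invariant across all minimizers once $\mX$ has full column rank.
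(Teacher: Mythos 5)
Your proof is correct and follows essentially the same route as the paper's: both reduce $\lambda=0$ to equal weights so that $\mZ = \mX\mV/m$, and both use the rank-$p$ hypothesis to identify the column space of $\mZ$ with that of $\mX$, so the global least-squares fit coincides with the OLS projection. Your treatment is in fact slightly more careful than the paper's, since you address the non-uniqueness of $\vbeta$ when $m > p$ (where $\mZ$ is rank deficient and \eqref{eq: linear global beta} does not literally apply) and show the prediction coefficient $\mV\vbeta/m$ is invariant across minimizers, whereas the paper only exhibits one particular minimizer that recovers $\vb^{\ast}$.
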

{\noindent}The proof is given in \ref{appendix::proofs}. It is worth emphasizing that it is \textit{not} necessary for the subsets $(\mX_{j}, \vy_{j})$, $j = 1, \ldots, m$ to be disjoint, nor is it for them to cover all data points for the above analysis to hold. The only requirement for the exact recovery of OLS is that the rank of $\mV$ is $p$.

The second extreme case is obtained with $\lambda \rightarrow \infty$, which corresponds to choosing the prediction of the local model whose center is at the minimum distance from the testing point. Our result for this case requires a specific way to choose the subsets, which is typically met by clustering methods.

\begin{proposition}\label{proposition::lambda-infty}
Assume each data point $\vx_{i}$ belongs to the subset to whose center it is closest. Then, as $\lambda \rightarrow \infty$, \LESS{} reduces to a collection of non-interacting local models: If a test point $\vx_{0}$ is closest to subset $j$, the prediction by \LESS{} at $\vx_{0}$ is given by $\vv_{j}\tr \vx_{0}$. 
\end{proposition}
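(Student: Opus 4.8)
The plan is to take the limit $\lambda \to \infty$ in the two places where $\lambda$ enters—the feature matrix $\mZ$ that determines the global coefficients $\vbeta$, and the weights $w_j(\vx_0)$ that enter the prediction at the test point—and to show that both collapse to a hard nearest-center assignment. First I would record the elementary softmax limit: writing $d_j := d(\vx, \bar{\vx}_j)$ and $d_{\min} := \min_{j'} d_{j'}$, and factoring $e^{-\lambda d_{\min}}$ out of the numerator and denominator of \eqref{eq: distance-based weights}, one sees that $w_j(\vx) \to \mathbf{1}[\,j = j^{*}(\vx)\,]$ as $\lambda \to \infty$, where $j^{*}(\vx)$ is the (assumed unique) index of the closest center. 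Applied to each training input $\vx_i$, and using the standing hypothesis that $\vx_i$ belongs to the subset to whose center it is closest, this forces $z_{ij} \to \vv_j\tr \vx_i$ when $j$ is that subset and $z_{ij} \to 0$ otherwise.

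Next I would exploit the structure this imposes on the limiting feature matrix $\mZ_\infty$. Because the subsets partition the training indices under the hypothesis, column $j$ of $\mZ_\infty$ is supported only on the rows corresponding to subset $j$, and these supports are disjoint across $j$. Hence $\mZ_\infty\tr \mZ_\infty$ is diagonal, with $j$-th diagonal entry $\sum_{i \in S_j} (\vv_j\tr \vx_i)^2$, where $S_j$ indexes the points of subset $j$. Assuming the mild nondegeneracy that no local fit is identically zero, this matrix is invertible, so by continuity of matrix inversion on the invertible matrices the global coefficients converge, $\vbeta \to \vbeta_\infty$, with
\[
\beta_{\infty,j} = \frac{\sum_{i \in S_j} (\vv_j\tr \vx_i)\, y_i}{\sum_{i \in S_j} (\vv_j\tr \vx_i)^2}.
\]

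The crux—and the step I expect to carry the real content—is identifying this ratio as exactly $1$. Here I would invoke the normal equations of the local OLS fit, $\mX_j\tr \mX_j \vv_j = \mX_j\tr \vy_j$. Writing the numerator as $\vv_j\tr \mX_j\tr \vy_j$ and the denominator as $\vv_j\tr \mX_j\tr \mX_j \vv_j$, the normal equations let me substitute $\mX_j\tr \vy_j = \mX_j\tr \mX_j \vv_j$ in the numerator, making it identical to the denominator; hence $\beta_{\infty,j} = 1$ for every $j$. Finally I would take the same softmax limit at the test point: if $\vx_0$ is closest to subset $j$, then $w_{j'}(\vx_0) \to \mathbf{1}[\,j' = j\,]$, so in \eqref{eq: global learner as linear combination} only the $j$-th term survives and the prediction tends to $(\vv_j\tr \vx_0)\, w_j(\vx_0)\, \beta_{\infty,j} = \vv_j\tr \vx_0$, as claimed.

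The main obstacle is not any single calculation but the rigor of interchanging the limit with the OLS solve: the weights are strictly positive for every finite $\lambda$, so $\mZ$ is dense and $\vbeta$ is genuinely coupled across subsets, decoupling only in the limit. I would handle this cleanly through the continuity argument above rather than by setting $\lambda = \infty$ outright, and I would flag the two nondegeneracy conditions it requires—uniqueness of the closest center (ties form a measure-zero set and are excluded by the hypothesis that each point belongs to a \emph{single} subset) and positivity of each diagonal entry $\sum_{i \in S_j}(\vv_j\tr \vx_i)^2$.
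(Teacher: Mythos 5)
Your proposal is correct and follows essentially the same route as the paper: the softmax weights collapse to hard nearest-center indicators, the global least-squares problem decouples into independent one-dimensional problems per subset, and the optimality of each local fit forces $\beta_j = 1$ (you argue this via the normal equations $\mX_j\tr\mX_j\vv_j = \mX_j\tr\vy_j$, the paper via idempotency of the projection $\mX_j(\mX_j\tr\mX_j)\inv\mX_j\tr$ --- the same computation in different notation). Your extra care is sound but does not change the substance; incidentally, your displayed expression for $\beta_{\infty,j}$ is the correct one-dimensional least-squares coefficient, whereas the paper's parenthetical remark misstates it as $\sum_{i: c_i = j} y_i \big/ \sum_{i: c_i = j} \vx_i\tr\vv_j$ before reaching $\beta_j = 1$ correctly through the projection argument.
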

{\noindent}The proof is given in \ref{appendix::proofs}. The two extreme cases demonstrate the critical role of the distance function in the communication level among the subsets. Therefore, it is natural to take $\lambda$ as a hyperparameter that could be tuned, \textit{e.g.}, by cross-validation.

\begin{remark}[Unnormalized weights]
The weights in \eqref{eq: distance-based weights} are normalized so that they sum to one. It is also possible to use unnormalized weights when generating the feature vectors, such as $w_{j}(\vx) = \exp\{ -\lambda d(\vx, \bar{\vx}_{j}) \}$ following the example of \eqref{eq: distance-based weights}. Such a choice may be less intuitive; for example, the above analysis on the effect of $\lambda$ does not hold with unnormalized weights. However, unnormalized weights may be useful when it is desired to have predictions with small magnitudes as the test points are far away from all of the subsets, to reduce the prediction variance.
\end{remark}

\section{Computational Experiments} \label{sec: Computational Experiments}
We have tested \LESS{} on publicly available datasets with varying sizes ($n \times p$): \texttt{abalone} ($4,176 \times 7$), \texttt{airfoil} ($1,503 \times 5$), \texttt{housing} ($505 \times 13$), \texttt{cadata} ($20,640 \times 8$), \texttt{ccpp} ($9,568 \times 4$), \texttt{energy} ($19,735 \times 26$), and \texttt{cpu} ($8,192 \times 12$). 
These datasets are all from the UCI repository by \cite{Dua2017} except \texttt{cadata} \citep{pace1997sparse}. 

In all the experiments, we have normalized the input as well as the output. Our open source implementation on Python is available online\footnote{\url{https://github.com/sibirbil/LESS}}, which can be used to reproduced all the results in this section.

\begin{remark}[Default Choices for \LESS{}] \label{rem: default choices for LESS}
The following set of choices for \LESS{} was mostly used during the experiments. The percentage of samples is set to determine the number of neighbors ($k$), and then the number of subsets $m=\lceil n/k \rceil$. The subsets are created by selecting $m$ anchors uniformly. For weighting in \eqref{eq: distance-based weights}, we use for any $\vx, \vx' \in \mathbb{R}^{p}$ the distance function $d(\vx, \vx') = \|\vx- \vx'\|_{2}$ with $\lambda = m^{-2}$. The number of replications ($b$) is set to 20.
\end{remark}

In the subsequent part, we distinguish the averaging and boosting variants of \LESS{} as \LESSA{} and \LESSB{}, respectively. In all experiments that require performance measurement and hyperparameter tuning, we have applied $5 \times 4$ nested cross-validation; that is, five-fold cross-validation is used for train-test split, and four-fold cross-validation is used for hyperparameter tuning. Mean-squared errors (MSE) are reported after averaging over five test splits.

\subsection{Ablation Study} \label{sec: Ablation Study}

First, an ablation study was conducted that shows the crucial roles of both the weighting and global learning steps in the success of \LESS{}. To this end, we considered \LESSA{} and \LESSB{} variants, each using the default choices in Remark \ref{rem: default choices for LESS} and linear regression as their local and global models. 


For \LESSA{} and \LESSB{} separately, we stripped off either the weighting part (NoW-G), the global learning part (W-NoG), or both (NoW-NoG) to obtain different `ablated' methods. More concretely, NoW-G corresponds to constructing the feature vector $\vz_i$ at $\vx_i$ as $z_{ij} = \mathcal{L}(\vx_i | \mX_{j}, \vy_{j}), j=1, \dots, m$, W-NoG corresponds to predicting the output for a test point $\vx_0$ with $\hat{y}_0 = \sum_{j = 1}^{m} \mathcal{L}(\vx_0 | \mX_{j}, \vy_{j}) w_{j}(\vx)$, and NoW-NoG corresponds to predicting the output for a test point $\vx_0$ with $\hat{y}_0 = \frac{1}{m}\sum_{j = 1}^{m} \mathcal{L}(\vx_0 | \mX_{j}, \vy_{j})$. For each method, we have only tuned the number of subsets $m$ from the set $\{5, 10, 20, 50\}$. 

Figure \ref{fig:abst} compares \LESSA{} and \LESSB{} against their respective ablated versions obtained as described above. These results demonstrate that the performance of the bare-bones method (NoW-NoG) improves in all problems with the weighting and global learning steps, showing the importance of those steps. Consequently, \LESS{} outperforms the other methods.

\begin{figure}[ht]
\begin{minipage}{0.70\textwidth}
\centerline{
\includegraphics[width = 1\linewidth]{./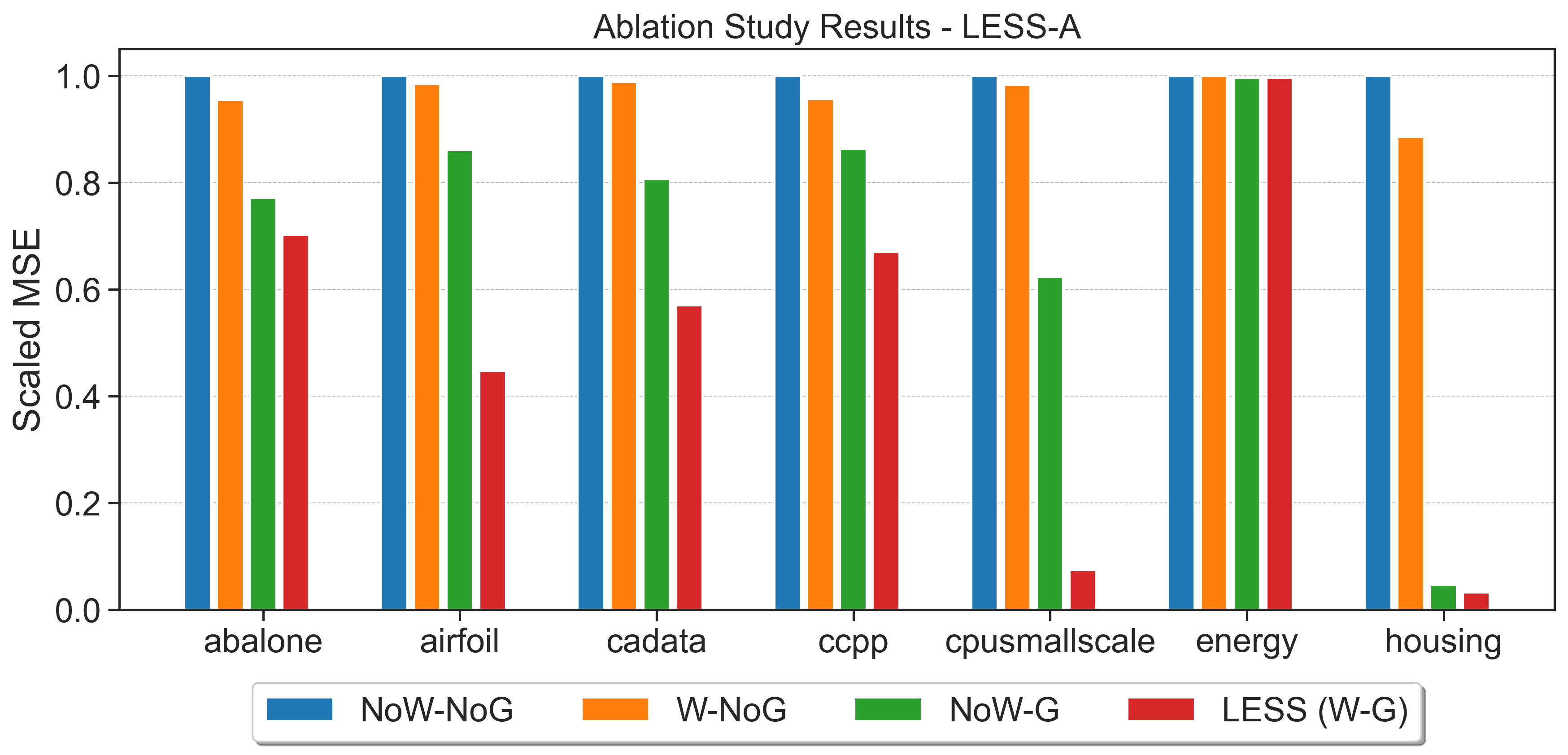}}
\centerline{
\includegraphics[width = 1\linewidth]{./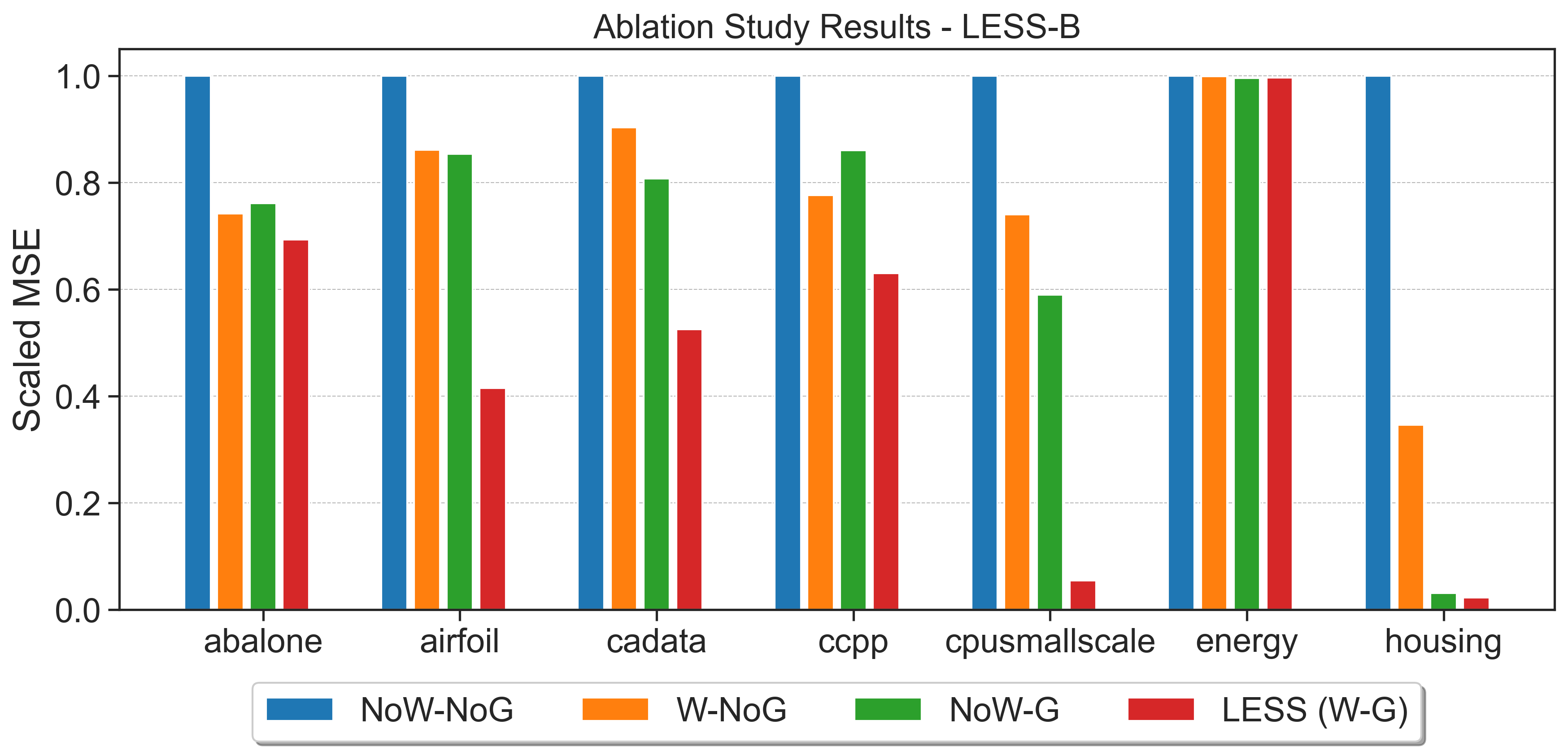}
}    
\end{minipage}
\hfill
\begin{minipage}{0.25\textwidth}
\caption{Results of the ablation study. The vertical axis is obtained by scaling the MSE values with the maximum one among  NoW-NoG (no weighting, no global learning), W-NoG (weighting, no global learning),  NoW-G (no weighting, global learning),  \LESS{} (default).}    
\label{fig:abst}
\end{minipage}


\end{figure}

\subsection{Comparison Against Other Methods} \label{sec: Comparison Against Other Methods}

To compare the performance of \LESS{} against other regression methods, we have selected 10 well-known methods. 
These methods are Random Forest (RF), $k$-Nearest Neighbour (KNN), multi-layer perceptron (MLP), Local Linear Regression (LocR), Magging (MAG), Gaussian process regression (GPR), Light Gradient Boosting Machine (LGBM), Extreme Gradient Boosting (XGB), and two fuzzy learning algorithms: the Adaptive Network-based Fuzzy Inference System (ANFIS) and the Takagi-Sugeno-Kang (TSK) algorithm.  
All these methods, except Local Linear Regression (LocR) and Magging (MAG), are available through \texttt{scikit-learn} package \citep{sklearn}. Like \LESS{}, we have also implemented LocR and MAG using main \texttt{scikit-learn} classes, and our implementations are based on Section 6.3 of \cite{hastie_09} and the pseudocode\footnote{We have realized that occasionally the quadratic programming solver fails to find a solution. In those cases, our implementation assigns equal weights to the estimators.} in \cite{BuhlmannMeinshausen2015Magging}, respectively. 

Although \LESS{} is a generic approach that allows various choices for its hyperparameters, we have decided to tune only a few of those, so that we can demonstrate ease-of-use and good performance of \LESS{}. In addition to the default choices in Remark \ref{rem: default choices for LESS}, we have also tried the following alternatives in cross-validation: decision tree regressors and linear regression for local learning, and $\lambda \in \{0.01, 0.1\}$ for the kernel parameter. Random Forest was selected for global learning. The number of subsets was tuned over $\{10, 20\}$. Finally, each compared method is tuned using cross-validation with the hyperparameter sets given in Table \ref{tab:hyps} in \ref{appendix::comp-against-others}.


The results in Table \ref{tab:results} show that \LESS{} ranks the best in three out of seven problems among all methods in terms of mean squared error values. LGBM outperforms the other methods in three problems. This performance is followed by MLP, which obtains the least average error for one problem.

We complement Table \ref{tab:results} with statistical many-to-one comparisons between LESS and the other methods. Table \ref{tbl: confidence intervals} shows the two-way ANOVA results using observations $\text{MSE}(i, j, k)$, the MSE for dataset $i$, method $j$, and repetition $k$, based on the assumption $\text{MSE}(i, j, k) = \mu_{i} + \nu_{j} + e_{i, j, k}$ with $e_{i, j, k} \overset{\text{i.i.d.}}{\sim} \mathcal{N}(0, \sigma^{2})$ for $i = 1, \ldots, 7; j = 1, \ldots, 12; k = 1, \ldots, 5$. Letting \LESSB{} be method $1$, we provide the 95\% simultaneous confidence intervals for the differences $\nu_{j} - \nu_{1}$ in Table \ref{tbl: confidence intervals}. The $p$-values below the confidence intervals correspond to the hypotheses that \LESSB{} and the compared method have the same performance in terms of (expected) MSE. The reported p-values and the confidence intervals suggest a significant superiority of \LESSB{} over the compared methods, except for LGBM.

The methods were compared with respect to their rankings in terms of average MSEs. Let $R_{i, j}$ be the ranking of the $j$'th method when it is applied to the $i$'th dataset. The averages $\frac{1}{7} \sum_{j = 1}^{7} R_{i, j}$ for each method $j$ are reported in the last line of Table \ref{tab:results}. The dataset-specific rankings, along with the average rankings over the datasets, are given in the \ref{appendix::comp-against-others} in Figure \ref{fig: Algorithm Rankings}. It can be observed that \texttt{LESS-B} and LGBM tie as they have the lowest overall ranking, while \texttt{LESS-B} outperformed LGBM in four of the seven datasets.

We have also conducted Friedman's non-parametric test to compare all the methods to \LESS{}, as suggested in \citet{Demsar_2006}. Friedman's test is based on the rankings of the methods in terms of the average MSEs given in Table \ref{tab:results}. Along with those averages, we also report the $p$-values on the last line of Table \ref{tbl: confidence intervals}, where each $p$-value corresponds to the null hypotheses that LESS and the compared method have the same (expected) ranking. Friedman’s test is known to be generally less powerful than the two-way ANOVA. This is also the case for our experiments, see the p-values in Table \ref{tbl: confidence intervals}. Still, the p-values are arguably sufficiently small except for LGBM and XGB, hence the test results largely support the findings of the two-way ANOVA related to the significance of the superiority of LESS. 

\subsection{Computation Times} \label{sec: Computation Times}

Figure \ref{fig:timings} shows the computation times in seconds obtained with \LESS{} for two datasets on a personal computer running macOS 15.7.1 with Apple M1 Pro Chip equipped with 32GB memory and 10 cores. The scatter plots depict that the computation times increase as the number of subsets increases, as expected. The bar plots show the computation times (in logarithmic scale) of other methods with the same datasets. Except for the MLP, we have used the default values suggested by the package maintainers. In the MLP we set the maximum number of iterations to 5,000 to ensure convergence. The dashed horizontal line is the computation time of \LESS{} obtained with $m = 20$ since it is also the default value in 

\begin{landscape}
{
\begin{table}[H]
\scriptsize
  \caption{\scriptsize  Computational results obtained with $5 \times 4$ nested cross-validation. The mean and std of MSE values over the 5 folds are shown.
   The smallest average error for each problem is marked with \textbf{boldface font}.
  }
  \label{tab:results}
  \centering
  {
\begin{tabular}{cc|rrrrrrrrrrrr}
  \hline
  \multicolumn{2}{c}{\textbf{Problem}} 
    & \LESSA{} & \textbf{\LESSB{}} & \textbf{ANFIS} & \textbf{TSK} 
    & \textbf{RF} & \textbf{KNN} & \textbf{MLP} & \textbf{LocR} 
    & \textbf{MAG} & \textbf{GPR} & \textbf{XGB} & \textbf{LGBM} \\
  \hline\hline
  
\texttt{abalone} & Avg. 
    & \textbf{4.36} & 4.42 & 5.30 & 4.66 
    & 4.86 & 4.90 & 4.43 & 4.52 
    & 7.13 & 4.77 & 4.87 & 4.84 \\
(4177×7) & Std. 
    & 0.40 & 0.39 & 0.39 & 0.42 
    & 0.43 & 0.43 & 0.42 & 0.45 
    & 1.48 & 0.43 & 0.45 & 0.41 \\
  \hline

\texttt{airfoil} & Avg. 
    & 6.64 & \textbf{2.59} & 16.36 & 4.98 
    & 3.44 & 8.74 & 4.62 & 5.83 
    & 30.95 & 6.48 & 3.20 & 3.00 \\
(1503×5) & Std. 
    & 0.74 & 0.50 & 0.96 & 0.61 
    & 0.68 & 1.54 & 0.40 & 0.74 
    & 3.02 & 0.70 & 0.76 & 0.74 \\
  \hline

\texttt{cadata ($\times 10^{9}$)} & Avg. 
    & 3.48 & 2.63 & N/A & 2.97 
    & 2.42 & 3.78 & 2.97 & 3.10 
    & 6.35 & 2.88 & 2.25 & \textbf{2.19} \\
(20640×8) & Std. 
    & 0.12 & 0.52 & N/A & 0.08 
    & 0.07 & 0.13 & 0.15 & 0.08 
    & 0.73 & 0.08 & 0.13 & 0.97 \\
  \hline

\texttt{ccpp} & Avg. 
    & 13.81 & \textbf{9.66} & 17.83 & 12.95 
    & 11.56 & 14.78 & 16.93 & 13.72 
    & 23.86 & 15.25 & 10.22 & 9.99 \\
(9568×4) & Std. 
    & 1.14 & 1.26 & 1.33 & 1.23 
    & 1.41 & 1.05 & 1.42 & 1.27 
    & 3.39 & 1.13 & 1.10 & 1.28 \\
  \hline

\texttt{cpusmallscale} & Avg. 
    & 9.44 & 7.10 & N/A & 8.34 
    & 8.91 & 14.56 & 8.67 & 8.15 
    & 278.80 & 77.04 & 7.66 & \textbf{6.84} \\
(8192×12) & Std. 
    & 0.60 & 0.47 & N/A & 0.31 
    & 2.02 & 2.98 & 0.60 & 0.47 
    & 41.16 & 8.36 & 1.50 & 0.23 \\
  \hline

\texttt{energy} & Avg. 
    & 210.45 & 213.79 & N/A & 211.36 
    & 247.88 & 211.86 & 213.08 & 214.33 
    & 211.85 & 277.70 & 210.39 & \textbf{210.28} \\
(19735×26) & Std. 
    & 1.60 & 1.92 & N/A & 1.81 
    & 2.84 & 1.82 & 1.02 & 1.82 
    & 1.84 & 4.51 & 1.71 & 1.65 \\
  \hline

\texttt{housing} & Avg. 
    & 14.16 & 12.63 & 17.28 & 16.59 
    & 12.71 & 24.05 & \textbf{12.31} & 13.71 
    & 60.10 & 26.63 & 16.03 & 14.43 \\
(506×13) & Std. 
    & 8.38 & 7.91 & 6.43 & 6.04 
    & 7.75 & 10.79 & 6.48 & 8.35 
    & 19.80 & 12.27 & 6.57 & 7.01 \\
  \hline

\multicolumn{2}{c}{Avg. ranking} 
    & 6.00 & \textbf{2.86} & 10.50 & 5.57 
    & 5.57 & 9.00 & 5.57 & 6.00 
    & 10.71 & 8.57 & 4.14 & \textbf{2.86} \\
  \hline\hline
\end{tabular}
  }
\end{table}
}
\vspace{-0.5cm}
%
\begin{table}[H]
\scriptsize
  \caption{\scriptsize CIs and p-values for \LESSB{} vs other methods.}
  \label{tbl: confidence intervals}
  \centering
  \resizebox{1\columnwidth}{!}{
\begin{tabular}{c|rrrrrrrrrrr}
  \hline
  \textbf{Diff between \LESSB{}}
    & \LESSA{} & \textbf{ANFIS} & \textbf{TSK} & \textbf{RF}
    & \textbf{KNN} & \textbf{MLP} & \textbf{LocR} & \textbf{MAG}
    & \textbf{GPR} & \textbf{XGB} & \textbf{LGBM} \\
  \hline\hline

CI left
    & -0.0111 & -0.4582 & 0.0223 & -0.0052
    & 0.0841 & -0.0511 & -0.0087 & 0.0191
    & 6.9e-05 & -0.0479 & -0.0598 \\

CI right
    & 0.3455 & 1.8038 & 0.2348 & 0.1385
    & 0.5823 & 0.3065 & 0.2766 & 4.3962
    & 1.0805 & 0.1210 & 0.0850 \\

$p$-value (two-way ANOVA)
    & 0.0615 & 0.1547 & 0.0253 & 0.0638
    & 0.0170 & 0.1312 & 0.0613 & 0.0486
    & 0.0500 & 0.3300 & 0.6846 \\
  \hline

$p$-value (Friedman)
    & 0.1250 & 0.1250 & 0.1250 & 0.0469
    & 0.0312 & 0.1250 & 0.0156 & 0.0312
    & 0.0156 & 0.4688 & 0.9531 \\
  \hline\hline
\end{tabular}
    }
\end{table}
\end{landscape}

{\noindent}our Python implementation. Figure \ref{fig:timings} demonstrates that both versions of \LESS{} are faster than TSK, RF, and GPR and slower than the other methods.
\begin{figure}[H]
  \centerline{
\begin{subfigure}[Problem \texttt{energy} (19,735$\times$26)]{
    \centering
    \includegraphics[width=0.4\linewidth]{./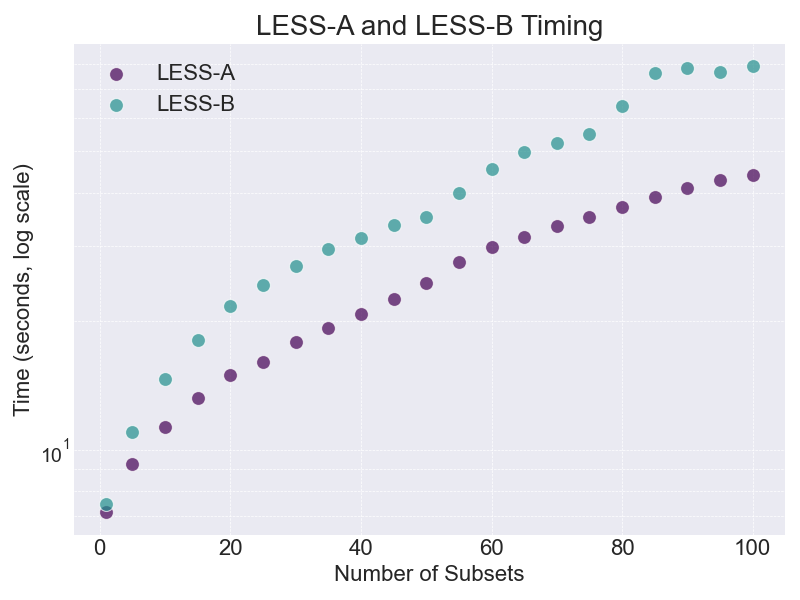}  
    \label{fig:energytime}}
\end{subfigure}
\begin{subfigure}[Problem \texttt{energy} (19,735$\times$26)]{
    \centering
    \includegraphics[width=0.5\linewidth]{./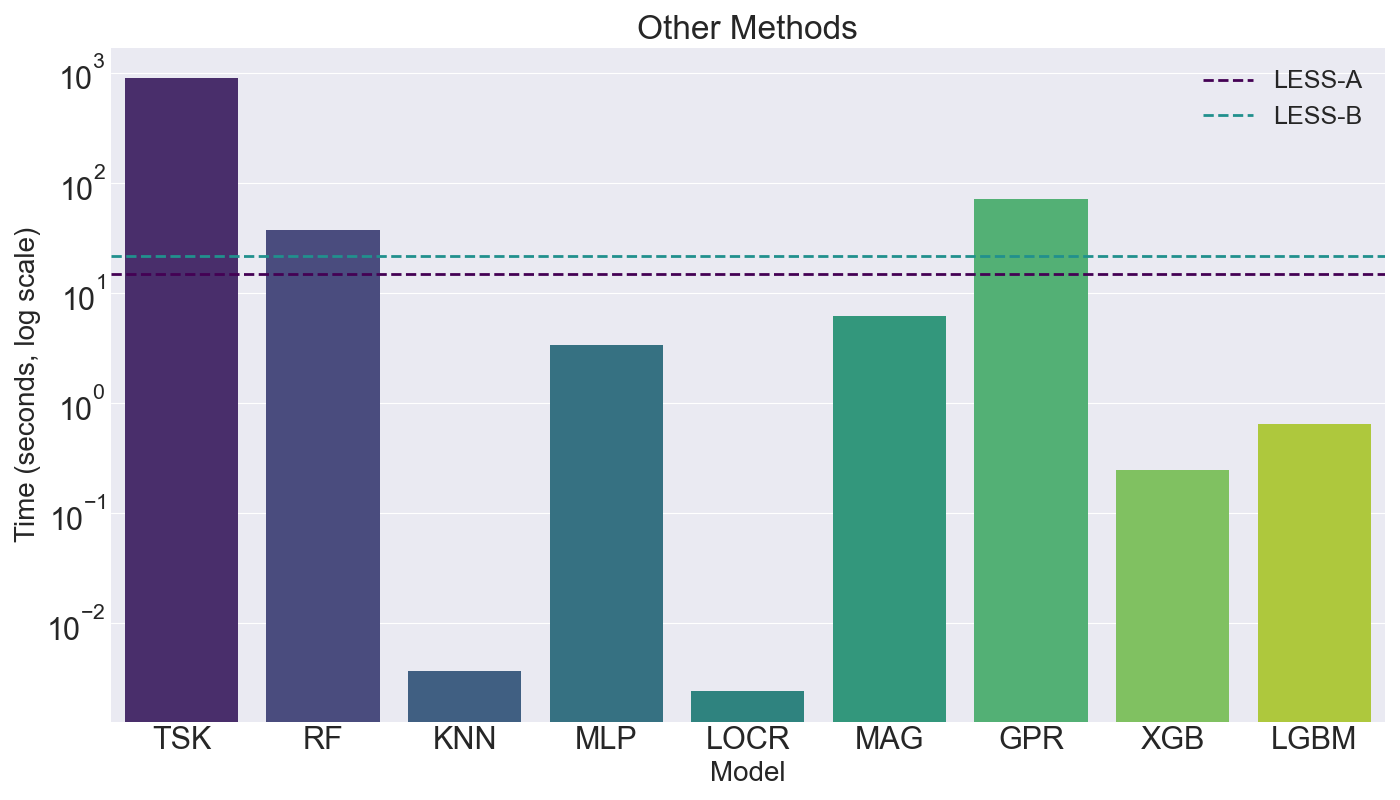} 
    \label{fig:superconductime}}
\end{subfigure}
}
\caption{Computation times of \LESSA{} and \LESSB{} (middle) vs number of subsets. Right: Time comparison against other methods; the horizontal dashed line shows the computation times of \LESSA{} and \LESSB{} with its default value of 5\% of the samples used for setting the number of subsets to 20.}
\label{fig:timings}  
\end{figure}

\section{Conclusion} \label{sec: Conclusion}
In this paper, we have proposed \LESS{}, a generic regression method for learning a model which behaves diversely over the input space. \LESS{} uses both subsampling and stacking ideas in a novel way. The method can also be seen as a two-layer meta-learning algorithm, where local learners feed their outputs as feature vectors to a global learner. Our numerical experiments have shown that \LESS{} is quite competitive when compared to the well-known regression methods. Moreover, \LESS{} is one of the promising methods among the competing methods in terms of computation time. This performance in time can even be improved with a straightforward parallel implementation. The first version of \LESS{} is released as an open-source Python package. 

There are also some limitations of our work as well as some possible extensions that may resolve some of those limitations. 
The performance of \LESS{} is very promising based on our numerical experiments yet there is no theoretical guarantee of its superiority. 
We report results by selecting the subsets randomly. However, it is unclear whether this is the best choice. A deterministic selection of subsets that maximizes the average distance between the subsets may help improve the performance of \LESS{}. Likewise, subset selection can be made via optimization, where the objective would be to obtain the best predictions from the local predictors. These ideas need to be tested.
  
For large datasets, the global learning step in \LESS{} can be time-consuming, especially under specific choices for the global learner that demand the knowledge of the transformed data matrix explicitly. Methods that scale with the data size could be chosen for the global learner to relieve the computational burden. One class of such methods is subsampling-based gradient methods. Also, it may be possible to avoid having to calculate and keep the transformed data matrix while still being able to carry out the calculations. For example, to calculate the ordinary least squares solution in the global learning phase, one does not need to know the transformed data matrix; instead, one can calculate the required matrix multiplications by sequentially processing its columns.

Although in principle \LESS{} can also be applied to classification, it is not clear how the categorical output values should be represented in the transformed data matrix. To this end, we are currently trying different approaches and testing them on standard data sets.

\section*{Acknowledgments}

We thank Kamer Kaya from Sabancı University for his help with the implementation of \LESS{}.





\footnotesize{
{\parindent0pt
\bibliography{LESS}
}}

\clearpage
\appendix

{{\noindent}\large{\textbf{Appendices}}}



\section{Related Methods} \label{sec: Related Methods}
There are several approaches closely related to \LESS{} in the literature. In this section, we discuss the most important ones among them and their relations to \LESS{}. Before, we start with a remark on the relation of \LESS{} to kernel approach, which is used by many learning methods discueed here. 

\begin{remark}[Relation to kernel methods] The kernel approach is used in many learning methods, such as support vector machines, kernel ridge, and Gaussian process regression. Considering \eqref{eqn:xhat}, we can define for any two vectors $\vx_s$ and $\vx_t$, the kernel as
\begin{align*}
  \kappa(\vx_s, \vx_t) &= \vz_{s}\tr \vz_{t} = \sum_{j=1}^m w_{j}(\vx_s) \CL(\vx_s | \mX_j, \vy_j)\CL(\vx_t | \mX_j, \vy_j) w_{j}(\vx_t).
\end{align*}
If the fitted local models are linear regressors, with each $\CL(\vx | \mX_j, \vy_j) = \vv_{j}\tr \vx$ for some $\vv_{j} \in \mathbb{R}^{p}$, then by using our notation, the kernel simplifies to
\[
\kappa(\vx_s, \vx_t) = \vx_s\tr \mV(\vx_s) \mV(\vx_t)\tr \vx_t,
\]
where for any $\vx \in \RR^p$, the $p \times m$ matrix $\mV(\vx)$ is defined such that its column $j$ is simply $\vv_{j} w_{j}(\vx)$. The derivation above also shows that if we use a global learning method that works with kernels, then we may not need to store the $\mZ$ matrix explicitly.
\end{remark}

\textbf{Stacking.} \LESS{} is related to the well-known method of stacking \citep{Breiman_1996b} in the light of equation \eqref{eq: global learner as linear combination}. Stacking also learns a linear combination of individual predictions to have a global prediction. However, different than stacking, \LESS{} involves a distance-based weighting $w_{j}(\vx_0)$, and instead of learning the overall coefficient $w_{j}(\vx_0) \beta_{j}$ for the $j$'th predictor, it learns $\beta_{j}$ only. The weight $w_{j}(\vx_0)$ stems from the way \LESS{} forms subsets on which it obtains its local predictors, which is another feature of \LESS{} that makes it different from the stacking methods. In the default implementation of \LESS{}, the subsets are formed by first selecting a sample at random as an anchor, followed by the selection of the $k$-nearest samples to that anchor. With such subsets that are concentrated at different locations, it becomes relevant to address distances from and to a subset. \LESS{} uses those distances to take into account the possible heterogeneity in the model: When we predict at a given point, \LESS{} penalizes the prediction obtained from a subset by the distance between the point and the \textit{centroid} of the subset. 

We would like to point out the generality offered by \LESS{} compared to stacking. The \LESS{} method described in the example above uses the ordinary least squares method for its global learner $\mathcal{G}(\vz | \mZ, \vy)$. However, in general, this global learner may be chosen to be any appropriate learning method.

\textbf{Magging.} The magging method proposed by \cite{BuhlmannMeinshausen2015Magging} was mentioned in Section \ref{sec: Introduction}. Magging also respects the heterogeneity in data. Magging proposes to use a convex combination of multiple estimators, found based on subsets of data as its final estimator. The coefficients of the convex combination are those that minimize the norm of the vector of final predictions. Since magging is presented in the context of linear models, this weighted combination can be transformed into a weighted combination of predictions at any point as the final prediction. However, there are differences between magging and \LESS{}. First, in magging, the same linear combination of the subset predictions is used everywhere for prediction. In contrast, in \LESS{}, the subset predictions are first scaled based on the distance between the center of their subsets to the input point, and only then those scaled predictions are transformed with a common linear combination to yield the final estimate. Second, magging uses non-local subsets while \LESS{} uses local subsets to compute the first stage predictions before aggregation. Furthermore, magging is based on the idea that any common effect across all subsets will be retained by the optimum convex combination. \LESS{} is arguably more flexible in the sense that it also accommodates heterogeneity caused by different components of a point being active in different locations in the input space.

\textbf{Local learning.}
Local learning methods are also closely related to \LESS{}. The LOcally Estimated Scatterplot Smoothing (LOESS) method by \cite{Cleveland_1979} fits a weighted regression model around each data point. This is a generalization over kernel regression, also known as the Nadaraya-Watson estimator by \cite{Nadaraya_1964}. Methods like LOESS, which \textit{delay} their learning until a query is made to the system, are also called ``lazy learning'' methods. Our implementation of local linear regression (LocR), used for benchmarking in Section \ref{sec: Comparison Against Other Methods}, is a variant of LOESS.

Another related model is the Gaussian process regression (GPR) model \citep{Rasmussen_and_Williams_2006}. In GPR, a Gaussian process is used to address the distance among the samples during prediction. A covariance matrix of these distances takes higher values when two samples are close to each other. Then, based on that covariance matrix, the prediction function is \textit{enforced} to take closer values at points in close vicinity of each other. When the size of the dataset is too large to work out the covariance matrix efficiently, subsampling ideas can be used. It is common to resort to subsets \cite[Chapter 8]{Rasmussen_and_Williams_2006}, either chosen greedily or randomly, such as the subset of regressors, subset of data points, and projected process methods. Among those approximate methods, the Bayesian Committee Machine \citep{Tresp_2000} and its variants \citep{Liu_et_al_2018} are examples of aggregation models that are worth mentioning.

Like LOESS and GPR, \LESS{} also tends to rely on the neighbor samples when it performs prediction. This is done via the weighting functions $w_{j}(\cdot)$, which reduces the effect of a local learner on a distant sample. However, unlike LOESS and GPR, \LESS{} also allows \textit{distant} local learners to have a larger effect on the prediction by learning the coefficients $\beta_{j}$, $j = 1, \ldots, m$, in its global learning phase. There is an exception to the way locality is enforced regarding GPR: The covariance matrix of GPR can be chosen to be based on a trigonometric function of distance, therefore capturing a repeated pattern in the model. However, the periodicity assumption can be restrictive, let alone the tuning challenges.

Scalable and local versions of Gaussian process regressors, namely the infinite mixture of Gaussian process experts, have been proposed by \cite{Rasmussen_and_Ghahramani_2001, Meeds_and_Osindero_2005}. In this model, each data point is assigned to a cluster 
The prior for this assignment is constructed with a Dirichlet process, and hence, the number of clusters is unbounded. A separate Gaussian process governs each cluster. Therefore, the cost of this algorithm is cubic in the number of inputs assigned to the largest cluster (which can be limited by design) as opposed to the cost which is cubic in the number of data points in the entire dataset. The idea of clusters is to explore the locality, and also, to benefit from computational savings. As the assignments are latent, they need to be estimated along with other parameters using Markov Chain Monte Carlo (MCMC) algorithm. The initial idea belongs to \cite{Rasmussen_and_Ghahramani_2001}, while \cite{Meeds_and_Osindero_2005} proposed a variant with the property of being a generative model for the inputs. For classification, a Bayesian way of the cluster-and-predict method, also based on Dirichlet processes, is given by \cite{Shahbaba_and_Neal_2009}.

Although the infinite mixture of Gaussian process experts has the advantage of allowing any $\#$ clusters, it is more costly compared to \LESS{}, since the MCMC algorithm designed for it needs to sweep over all the points in the dataset in every iteration. The mixture of experts and sparsity approaches are combined by \cite{Shazeer_et_al_2017}, who propose the Sparsely-Gated Mixture-of-Experts Layer as a component of a deep learning network. Here, for an input sample, a small number of experts among many are activated by a gating network. The method, although generic, is applied for language modeling and machine translation where the Mixture-of-Experts is applied between layers of a recurrent neural network.

The locally linear ensemble method for regression \citep{Kang_and_Kang_2018} performs an expectation-maximization algorithm to form clusters and train simple linear models for each cluster. Prediction is then performed as a weighted average of the local predictions. While the method is simpler, it is more specific compared to \LESS{} in terms of the range of local models and the way the local models are combined. Moreover, while \LESS{} is non-iterative, the method of \citet{Kang_and_Kang_2018} is iterative and it needs to sweep over all the points in the dataset in every iteration, similar to the Gaussian process experts approach.

\textbf{Fuzzy inference.}  Fuzzy inference systems (FIS) \citep{Babuska_1998, Mamdani_1974, Takagi_Sugeno_1985_FIS} are also related to \LESS{}. In particular, the Takagi-Sugeno (TS) fuzzy models \citep{Takagi_Sugeno_1985_FIS} and the adaptive neuro-fuzzy generalization of FIS, namely ANFIS \citep{Jang1993ANFIS}, are two successful frameworks. Several modern implementation tools exist for those frameworks; see, e.g.\ \citet{Cui_et_al_2022, Zhang_and_Chen_2024} for TS and ANFIS, respectively.


ANFIS boils down to TS fuzzy models when linear models are used in the clusters to predict the output. In that respect, \LESS{} with linear local and global models has similarities with TS fuzzy models as well \citep{YenWang1998ImprovingTSK}.  That being said, \LESS{} and ANFIS are essentially different both on a technical level and in terms of the approach in the design and combination of clusters. Below, we outline some of the important differences.

\noindent \textit{(i) Optimized parameters and computational load:} One difference between \LESS{} and ANFIS is in the number of parameters to learn. In ANFIS, both parameters of the membership functions (corresponding to the weights in \LESS{}) and the consequent parameters (corresponding to the local model parameters in \LESS{}) are learned during training. The training is an iterative procedure, where a back-propagation step is used to optimize the weights and the least-squares method is used to update the consequent parameters. In contrast, \LESS{} does not optimize its distance-based weights; the weight functions are fixed. 
A related difference is in the way the consequent parameters of ANFIS, which correspond to the local model parameters in \LESS{}, are learned. While ANFIS learns its consequent parameters simultaneously for all its clusters, \LESS{} learns them separately for each cluster in isolation from the other clusters. 
These differences suggest that, while ANFIS performs a finer tuning in training, \LESS{} is a lot easier to train with significantly reduced computational complexity.

\noindent 
\textit{(ii) Combining the `local' outputs:} In ANFIS, the final output is often a weighted average of the linear outputs obtained from the subsets. \LESS{} is somehow more general in combining those `local' outputs: Once the local models are trained, the final output prediction is fit over a projected space where the original data is transformed using weights $w_{j}(\mathbf{x})$. The mentioned generality is because this global learner can be a nonlinear regressor, such as a decision tree.


\textbf{Feature learning.} \LESS{} is a two-stage algorithm where the first stage somewhat involves representing input variables as feature vectors of predictors. In that sense, \LESS{} can be related to representation/feature learning methods; see, \cite{Bengio_et_al_2013, Le-Khac_et_al_2020} for comprehensive reviews. There are many methods for representation learning, such as those based on sparse representations \citep{Lee_et_al_2006} and neural networks \citep{Hinton_and_Salakhutdinov_2006}. \LESS{} is especially related to those feature learning methods based on techniques that respect the locality of input, such as those involving $k$-means clustering \citep{Coates_and_Ng_2012}, vector quantization, locally linear embedding \citep{Roweis_and_Saul_2000}, local coordinate coding \citep{Yu_et_al_2009, Yu_and_Zhang_2010}, feature clustering \citep{Xu_and_Lee_2015}. Although those methods are also motivated by dimensionality reduction, their relation to \LESS{} is mostly due to their acknowledgment of the non-homogeneity of input data. Moreover, feature learning methods \citep{Xu_and_Lee_2015} are directly motivated by the use of dimensionality reduction for regression. \LESS{} is not necessarily a dimension-reduction method, since the generated feature vectors can be even larger in dimension than the original inputs. A major difference between \LESS{} and the mentioned works in the way of generating feature vectors is that while the mentioned works are unsupervised, \LESS{} uses both input and output to learn its features. An exception to this is the work of \cite{Parelta_and_Soto_2014}, where local feature selection is performed in the clusters of a mixture-of-experts model for classification.

\textbf{Subspace methods.} Another line of related research concerns selecting random subsets of input vector components, also known as feature selection. \cite{BootNibbering2019RandomSubspace} present asymptotic properties and upper bounds for random subspace methods using random feature selection and random Gaussian weighting of predictors. The dimension of the feature space is reduced by selecting random features. This approach can be counted as another feature learning approach, which is fundamentally different than \LESS{} where the source of randomness is the random sample subset selection, not feature selection.

\section{Choices for Subset Selection and Dataset Splitting} \label{sec: Variants} 

The first choice is to apply a clustering method in place of subset selection. Some of the clustering methods require setting the number of clusters in advance, like $k$-means, and others, like $X$-means clustering \citep{Pelleg_and_Moore_2000}, determine the number of clusters automatically. In either case, these clusters constitute the subsets in \LESS{}. Unlike selecting the subsets with anchor points, clustering methods entail obtaining subsets with varying sizes and they partition the dataset into mutually exclusive, collectively exhaustive subsets. Note that if the clustering method is not stochastic, or in other words, it does not produce different clusters by varying random seeds, then the averaging step can safely be omitted.

The second choice is based on splitting the training dataset into two parts. The first part is used for local learning, whereas the second part is reserved for global learning. This can be considered a validation approach since the local predictors and the global predictor are trained with independent parts of the dataset. We expect that this approach may decrease the variance of the resulting \LESS{} variant by reducing the potential to overfit to the same dataset used for both local and global learning. However, there is also a trade-off because splitting the dataset into two parts decreases the sizes of the training datasets. With such smaller training datasets, the performances of the local and global predictors may deteriorate. We also conduct numerical experiments with these variants in the next section. 

Table \ref{tbl: choices} summarizes the structural choices for \LESS{} that have been discussed so far. As the table implies, there are several structural choices for \LESS{}, as well as several numerical parameters such as the number of subsets, $m$, and the sample size $k$. As done in Remark \ref{rem: default choices for LESS}, it is possible to declare some default choices for \LESS{} and tune for the others with cross-validation.

\begin{table}[H]
\caption{Structural choices for \LESS{}}
\label{tbl: choices}
\centerline{
\begin{tabular}{l c}
\toprule
 Structural choices & Alternatives \\
\midrule
Subset selection & random anchors, clustering \\
Data for local and global phases & split, do not split \\
Local model & any regression model \\
Global model & any regression model \\
Use of repetitions & averaging, boosting \\
\bottomrule
\end{tabular}
}
\end{table}

\section{Proofs of Propositions}\label{appendix::proofs}

\begin{proof}[Proof of Proposition~\ref{proposition::OLS}]
When $\lambda = 0$, all predictions from the subsets will be weighted equally with $w_{j}(\vx) = 1/m$ for all $j = 1, \ldots, m$.  Let $\mV$ be the $p \times m$ matrix whose $j$'th column is $\vv_{j}$. Then, with equal weights, we have $\vz_{i} =  \mV\tr \vx_{i} /m$ and $\mZ = \mX \mV /m$. The global model finds a $\vbeta$ such that $\| \mZ \vbeta - \vy \|_{2}$ is minimized. On the other hand, consider minimizing $\| \mX \vu - \vy \|_{2}$ with respect to $\vu$, and let $\vu^{\ast}$ be a solution. Since $\mV$ has rank $p$, the column space of $\mZ$ is spanned by the columns of $\mX$, and as a result there exists a $\vbeta$ which minimizes $\| \mZ \vbeta - \vy \|_{2}$ and satisfies $\mV \vbeta / m = \vu^{\ast}$. Therefore, the prediction at $\vx_{0}$ is $\vbeta\tr \mV\tr \vx / m = \vu^{\ast \intercal} \vx_{0}$. This concludes that \LESS{} reduces to OLS.
\end{proof}

\begin{proof}[Proof of Proposition~\ref{proposition::lambda-infty}]
Given the subsets, let $c_{i}$ be the number of the subset whose center is closest to the data point $\vx_{i}$. Then, as $\lambda \rightarrow \infty$, the feature vector $\vz_{i}$ has for $j=1, \dots, m$, the components 
\[
z_{ij} = \begin{cases} \vx_{i}\tr \vv_{j}, & \text{ for } j = c_{i}; \\
0, & \text{ otherwise}.
\end{cases}
\]
This means that the subsets do not learn from each other, and for each $j \in \{1, \ldots, m\}$,  \eqref{eq: linear global beta} reduces to finding the component $\beta_{j}$ that minimizes 
\[
\sum_{i: c_{i} = j}  (\vx_{i}\tr \vv_{j} \beta_{j} - y_{i})^{2}.
\]
(This is solved at $\beta_{j} = \sum_{i: c_{i} = j} y_{i} /\sum_{i: c_{i} = j} \vx_{i}\tr \vv_{j}$.) Moreover, due to the assumption in the proposition, subset $j$ contains exactly all of those $\vx_{i}$'s with $c_{i} = j$ . Thus, the minimization problem above for $\beta_{j}$ becomes
\[
\min_{\beta_{j}} \| \mX_{j} (\mX_{j}\tr \mX_{j})^{-1} \mX_{j}\tr \vy_{j} \beta_{j}- \vy_{j} \|_{2}^{2}
\]
which is solved at $\beta_{j} = 1$. The leads to the fact that the global model predicts the response at a test point $\vx_{0}$ as $\vv_{j}\tr \vx_{0}$, the prediction of the local model of the closest subset to $\vx_{0}$.
\end{proof}

\section{Further Computational Results} \label{appendix::results}

In this section we first provide the hyperparameters tested during our comparison and the detailed ranking performance of each method. Then, we present results concerning the effect of using different local and global learners. Lastly, we give an analysis on the effect of using different subset selection data splitting strategies.

\subsection{Comparison Against Other Methods}\label{appendix::comp-against-others}

\begin{table}[H]
    \caption{Hyperparameter sets used for all methods. The default values are \underline{underlined.}}
    \label{tab:hyps}
    \centering
    \resizebox{1\columnwidth}{!}{
    \begin{tabular}{l|l}
    \hline
    \multicolumn{1}{c}{\textbf{Method}} & \multicolumn{1}{c}{\textbf{Hyperparameter Sets}} \\
    \hline\hline
    \LESSA{}, \LESSB{} & 
    $\#$ subsets, $\{10, \underline{20}\}$; kernel coef., $\{0.1, \underline{0.01}\}$; local model, $\{\underline{\textit{linear}}, \textit{tree}\}$ \\
    \hline
    RF & $\#$ estimators, $\{\underline{100}, 200\}$ \\
    KNN & $\#$ neighbors, $\{3, \underline{5}, 10, 20, 50, 100\}$ \\
    MLP$^*$ & regularization parameter $\alpha$, $\{\underline{0.0001}, 0.001, 0.01\}$ \\
    LocR & percentage of samples, $\{1\%, 5\%, 10\%, \underline{20\%}\}$ \\
    MAG & percentage of samples, $\{10\%, \underline{20\%}, 30\%\}$ \\
    GPR & constant for the kernel matrix $\alpha$, $\{\underline{0.1}, 0.0001, 0.00000001\}$ \\
    LGBM & 
    learning rate, $\{0.01, \underline{0.1}\}$; $\#$ estimators, $\{\underline{100}, 200\}$; $\#$ leaves, $\{31, 63\}$ \\
    XGB & 
    learning rate, $\{0.01, \underline{0.1}\}$; $\#$ estimators, $\{\underline{100}, 200\}$; maximum depth, $\{6, 8\}$ \\
    ANFIS & 
    hybrid, $\{\underline{\textit{True}}, \textit{False}\}$; $\#$ epochs, $\{25, 50\}$ \\
    TSK & 
    fuzzy index, $\{\underline{2.0}, \textit{auto}\}$; $\#$ clusters, $\{5, 10, 20\}$; consequent, $\{\textit{Ridge}, \textit{Decision Tree}, \textit{RF}\}$ \\
    \hline
    \multicolumn{2}{l}{\small $^*$maximum $\#$ iterations is set to 1{,}000 to ensure convergence.} \\
    \end{tabular}}
\end{table}

\begin{figure}[H]
\includegraphics[scale = 0.5]{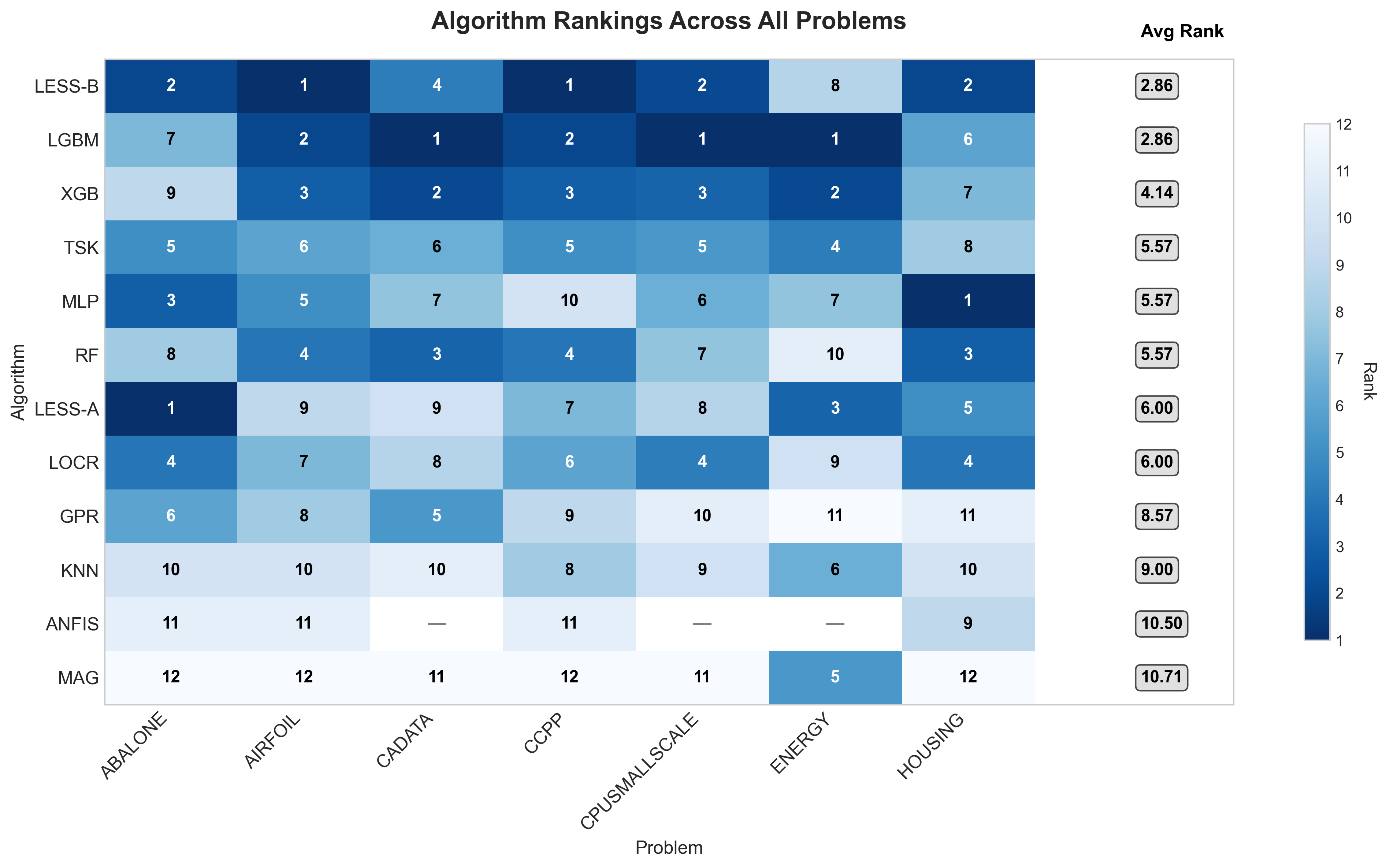}
\caption{Algorithm rankings. For each dataset $i$ and method $j$, $R_{i,j}$ is displayed in the corresponding cell. The methods are ordered from top to bottom with respect to their average rankings $\frac{1}{7} R_{i, j}$, shown in the right-most column.}
\label{fig: Algorithm Rankings}
\end{figure}

\subsection{Effect of Different Local and Global Learners} \label{sec: Effect of Difference Local and Global Learners}
In this experiment, we looked at the performance of \LESS{} with different combinations of choices of method for local learning and global learning phases. For local learning, we considered linear regression (L) and the decision tree (DT) methods. For the global learning phase, we considered linear regression (L) vs random forest (RF). Figure \ref{fig:incr} compares the performance of the resulting four combinations of ``local-global'' learning methods, labeled as \texttt{L-L}, \texttt{DT-L}, \texttt{L-RF}, and \texttt{DT-RF}.
The figure demonstrates that the performance of \LESS{} can be improved when nonlinear estimators are used for local and global learning. Indeed, except for the \texttt{abalone} and \texttt{energy} datasets, \LESS{} with local DT and global RF estimators yields a better performance than its all-linear variant. For the majority of the datasets, using at least one non-linear method in both local and global models yields the best results. 

\begin{figure}[H] 
 \centerline{
 \includegraphics[width = 0.7\linewidth]{./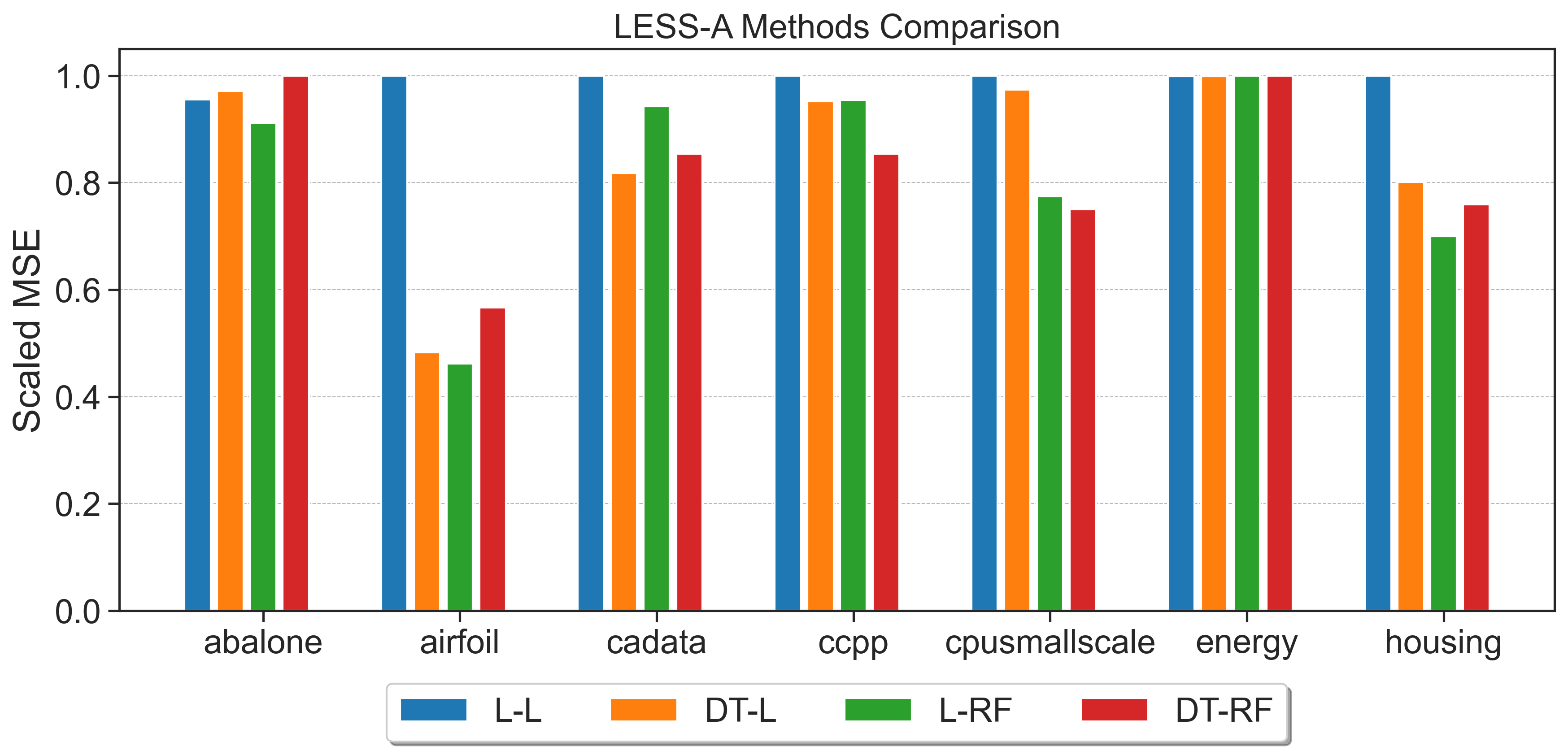}
 }
 \centerline{
 \includegraphics[width = 0.7\linewidth]{./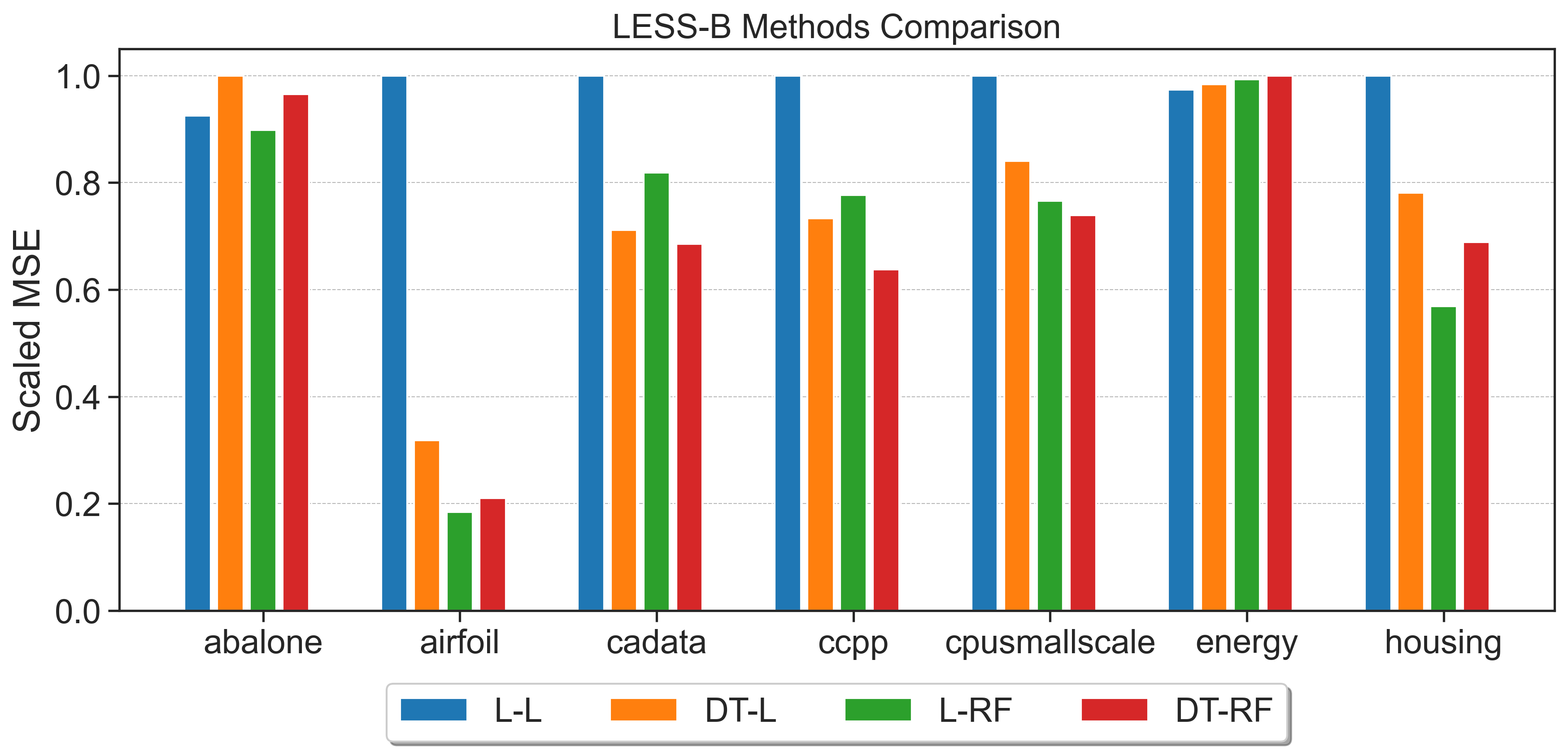}
 }
\caption{Results of changing local and global estimators in \LESS{}. Vertical axis is obtained by scaling the MSE values with the maximum. Options for local estimators are \texttt{L}: linear regression, \texttt{T}: decision tree. Options for global estimators are \texttt{L:} linear regression, \texttt{RF}: Random forest.}
 \label{fig:incr}
\end{figure}

\subsection{Effect of Choices for Subset Selection and Data Splitting} \label{sec: Performances of Variants}

We have conducted our last comparative experiments on variants of \LESS{} obtained with different choices for subset selection and dataset splitting, as described in Section \ref{sec: Variants}. While the default choice for subset selection is by randomly selecting $m$ anchors, we have considered the $k$-means clustering method as an alternative. To test the validation approach, we use a 70\%-30\% split of the dataset for training the local predictors and the global predictor, respectively. Consequently, we obtain four alternative variants of each of \LESSA{} and \LESSB{} that uses the default choices given in Remark \ref{rem: default choices for LESS}: \texttt{NoV-NoC} that uses neither validation set nor clustering, \texttt{NoV-C} that uses only clustering, \texttt{NoC-V} that uses only validation set, and finally \texttt{V-C} that uses both validation set and clustering. For all four methods, we used linear regression for the local models and random forest for the global model. As before, we have conducted nested cross-validation for performance measurement and hyperparameter tuning. We have used the same parameter grids given in Section \ref{sec: Comparison Against Other Methods}, except for $m$, which we tuned from the set $\{5, 10, 20, 100\}$ for all the methods. Note that for \texttt{NoV-C} and \texttt{V-C}, $m$ is the number of clusters.

Figure \ref{fig:vars} shows the results obtained with different variants. The results indicate that those variants have similar performances on most datasets, although the differences are more visible for the \texttt{airfoil} and \texttt{housing} datasets. Among those datasets, for \texttt{airfoil}, both \LESSA{} and \LESSB{} consistently benefited from using validation sets. For the \texttt{housing} dataset, however, the effects of the variants are not similar between \LESSA{} and \LESSB{}. On the whole, the performance of the \texttt{LESS} methodology in general seems robust against the choices concerning using a validation set and clustering.


\begin{figure}[H]
   \centerline{
 \includegraphics[width = 0.7\linewidth]{./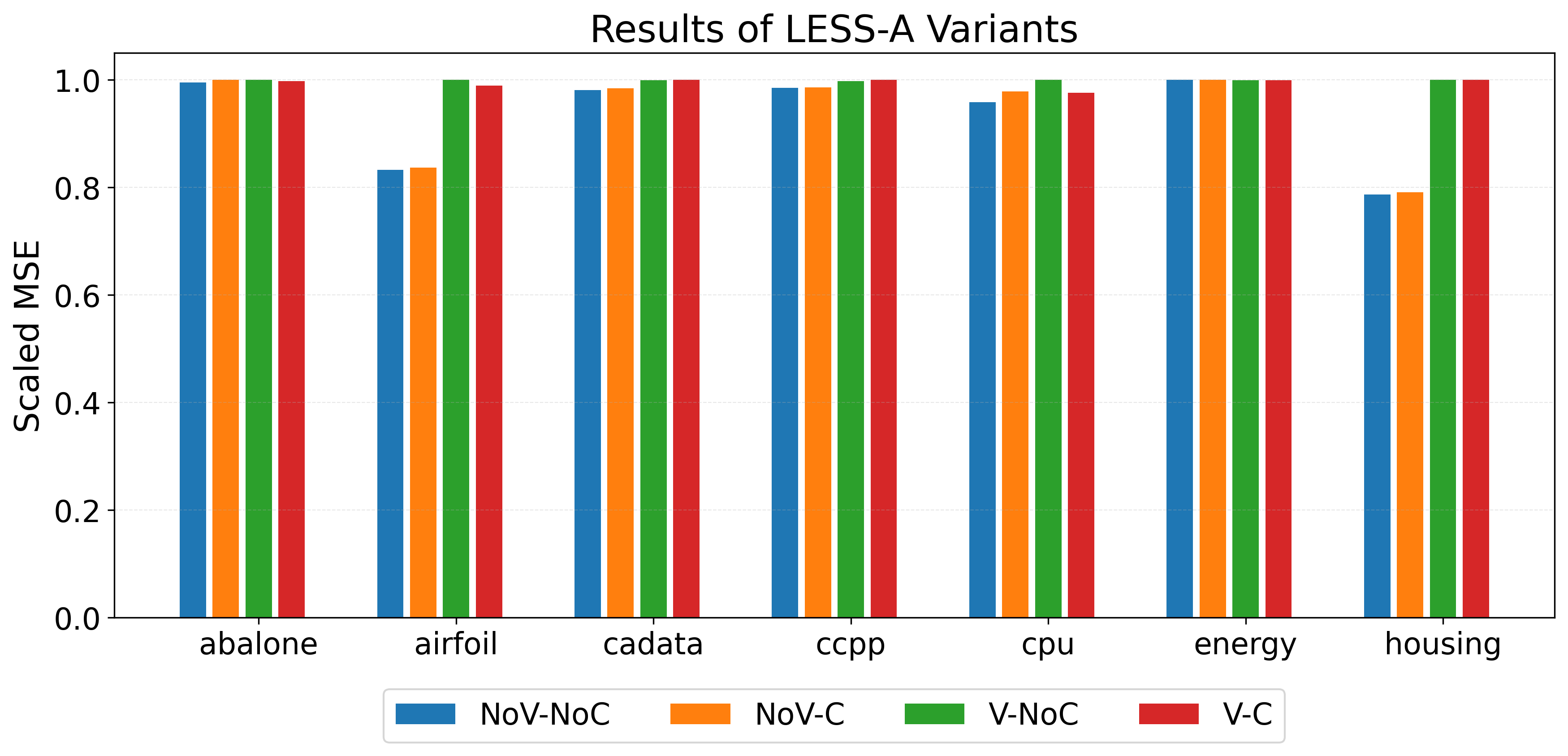}
 }
  \centerline{
 \includegraphics[width = 0.7\linewidth]{./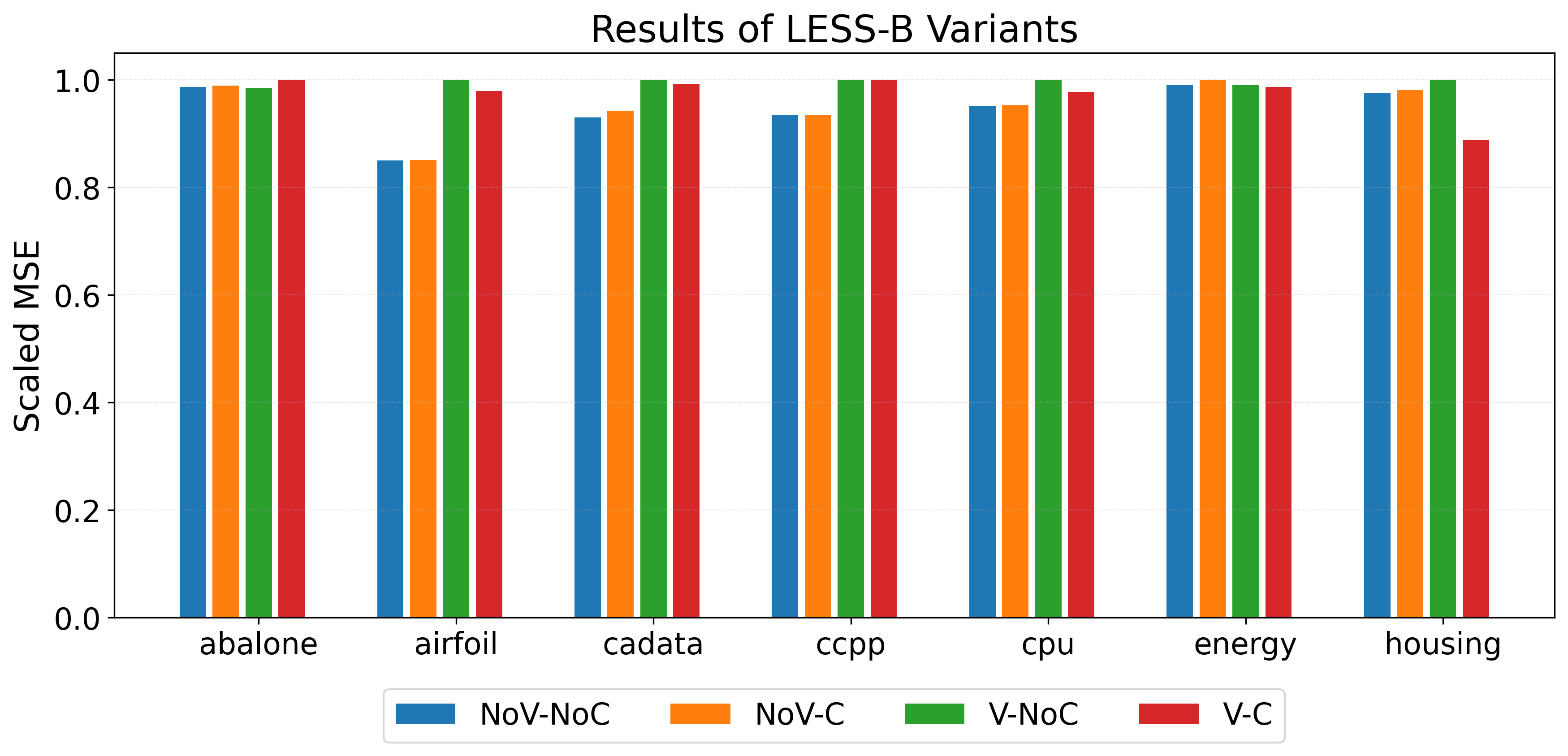}
 }

  \caption{Results of \LESS{} variants. Vertical axis is obtained by scaling the MSE values with the maximum one among \texttt{V-C} (validation, clustering),  \texttt{NoV-C} (clustering, no validation), \texttt{NoC-V} (no clustering, validation),  \texttt{NoV-NoC} (default)}
  \label{fig:vars}
\end{figure}

\end{document}